\documentclass[article]{my}
\newcommand{\paragraph}[1]{\textit{#1}.~}

\newcommand{\myemph}[1]{#1} 

\usepackage[OT1]{fontenc}
\usepackage[utf8]{inputenc}
\usepackage{color}
\usepackage{amsmath}
\usepackage{amssymb,latexsym}
\usepackage[font={small}]{caption}
\usepackage{graphicx}
\usepackage{etoolbox,xparse,xspace}
\usepackage{array,multirow,booktabs} 
\usepackage{bigstrut} 
\usepackage{bbm} 
\usepackage{dsfont} 

\usepackage[multiple]{footmisc} 

\usepackage{tikz,pgfplots,tikz-3dplot}
\usetikzlibrary{arrows.meta,shapes.misc,positioning,fit,calc,intersections,through,decorations.markings}
\tdplotsetmaincoords{55}{135}        

\tikzstyle{cell}=[dashed,thick]
\tikzstyle{simplex}=[thick]
\newcommand{\tikzfigscale}{2.2}
\newcommand{\placefiglabel}[1]{\node at (1.1,0,1.7) {#1};}

\graphicspath{{../prop-figs/}}

\usepackage[colorinlistoftodos,textsize=tiny]{todonotes} 

\newcommand{\Comments}{0}
\definecolor{gray}{gray}{0.5}
\definecolor{lightred}{rgb}{1,0.6,0.6}
\definecolor{darkgreen}{rgb}{0,0.5,0}
\newcommand{\mynote}[2]{\ifnum\Comments=1\textcolor{#1}{#2}\fi}
\newcommand{\mytodo}[2]{\ifnum\Comments=1
  \todo[linecolor=#1!80!black,backgroundcolor=#1,bordercolor=#1!80!black]{#2}\fi}

\newcommand{\relint}{\textnormal{relint}}
\newcommand{\interior}{\mathrm{int}\,}
\newcommand{\tr}{\top}
\newcommand{\id}{\mathrm{id}}
\newcommand{\Var}{\mathrm{Var}}

\newcommand{\A}{\mathcal{A}}
\newcommand{\C}{\mathcal{C}}
\newcommand{\D}{\mathcal{D}}
\newcommand{\E}{\mathbb{E}}
\renewcommand{\H}{\mathcal{H}}
\newcommand{\I}{\mathcal{I}}
\newcommand{\fin}{{\mathrm{fin}}}
\newcommand{\N}{\mathbb{N}}
\renewcommand{\P}{\mathcal{P}}
\newcommand{\gaussians}{\mathcal{G}_\mathrm{mix}}
\newcommand{\Pquant}{\mathcal{P}_\mathrm{q}}
\newcommand{\Q}{\mathbb{Q}}
\newcommand{\Qc}{\mathcal{Q}}
\newcommand{\R}{\mathcal{R}}
\let\oldS\S 
\renewcommand{\S}{\mbox{\oldS\hspace{-0.5mm}}}
\newcommand{\V}{\mathcal{V}}

\newcommand{\X}{\mathcal{X}}
\newcommand{\Y}{\mathcal{Y}}

\newcommand{\convhull}{\mathrm{conv}}
\newcommand{\conv}{\convhull}

\newcommand{\countinf}{\infty}

\newcommand{\defeq}{\doteq}
\newcommand{\ones}{\mathds{1}}  

\newcommand{\sgn}{\mathrm{sgn}}
\newcommand{\im}{\mathop{\mathrm{im}}}
\newcommand{\spn}{\mathop{\mathrm{span}}}

\def\reals{\mathbb{R}}

\newcommand{\argmin}{\mathop{\mathrm{argmin}}}
\newcommand{\argmax}{\mathop{\mathrm{argmax}}}
\newcommand{\arginf}{\mathop{\mathrm{arginf}}}

\newcommand{\inprod}[1]{\left\langle #1 \right\rangle}

\newcommand{\elic}{\mathsf{elic}}
\newcommand{\elici}{\elic_\ID}
\newcommand{\elicifin}{\elic_{\I^\fin}}
\newcommand{\iden}{\mathsf{iden}}
\newcommand{\idprop}{\Gamma_{\id}}
\newcommand{\EL}{\mathcal{E}}
\newcommand{\ID}{\mathcal{I}}
\newcommand{\ES}{\mathrm{ES}}
\newcommand{\lbar}{\underline{L}}

\newcommand{\affdim}{\mathrm{affdim}}
\newcommand{\codim}{\mathrm{codim}}

 \renewcommand{\S}{\mbox{\oldS\hspace{-0.5mm}}}
\newcommand{\Clin}{{\C_\mathrm{lin}}}
\newcommand{\Ccvx}{{\C_\mathrm{cvx}}}
\newcommand{\Cstrict}{{\C_\mathrm{strict}}}
\newcommand{\Cstrong}{{\C_\mathrm{strong}}}
\newcommand{\mode}{\mathrm{mode}}
\newcommand{\mi}{\mathrm{mi}}
\renewcommand{\ones}{\mathds{1}} 

\newcounter{foo}
\newcommand{\tempsetcounter}[2]{
  \setcounter{foo}{\value{#1}}
  \setcounter{#1}{#2}
}
\newcommand{\restorecounter}[1]{\setcounter{#1}{\value{foo}}}

\renewcommand{\Comments}{1}
\ifnum\Comments=1
\setlength{\marginparwidth}{0.9in}
\setlength{\marginparsep}{0.1in}
\fi

\begin{document}

\jname{Biometrika}

\markboth{R. Frongillo \and I.A. Kash}{Elicitation Complexity of Statistical Properties}

\title{Elicitation Complexity of Statistical Properties}

\author{RAFAEL FRONGILLO}
\affil{Department of Computer Science, University of Colorado Boulder\\ 1111 Engineering Dr., Boulder, CO, USA 80309 \email{raf@colorado.edu}}

\author{\and IAN A. KASH}
\affil{Department of Computer Science, University of Illinois Chicago\\
  851 S. Morgan (M/C 152), Room 1120 SEO, Chicago, IL 60607-7053
  \email{iankash@uic.edu}}

\maketitle

\begin{abstract}
  A property, or statistical functional, is said to be elicitable if it minimizes expected loss for some loss function.
  The study of which properties are elicitable sheds light on the capabilities and limitations of point estimation and empirical risk minimization.
  While recent work asks which properties are elicitable, we instead advocate for a more nuanced question: how many dimensions are required to \myemph{indirectly} elicit a given property?
  This number is called the elicitation complexity of the property.
  We lay the foundation for a general theory of elicitation complexity, including several basic results about how elicitation complexity behaves, and the complexity of standard properties of interest.
  Building on this foundation, our main result gives tight complexity bounds for the broad class of Bayes risks.
  We apply these results to several properties of interest, including variance, entropy, norms, and several classes of financial risk measures.
  We conclude with discussion and open directions.
\end{abstract}

\begin{keywords}
Elicitability;
Scoring rule;
Loss function;
Empirical risk minimization;
Point forecast;
Risk measure.
\end{keywords}

\section{Introduction}
\label{sec:elic-complex-introduction}

Loss functions are used throughout statistics and machine learning, in tasks ranging from estimation and model selection, to forecast ranking and comparison \citep{gneiting2007strictly,gneiting2011making}.
In particular, through the ubiquitous paradigm of empirical risk minimization,
a model is chosen to minimize a loss function, perhaps with regularization, averaged over a data set.
To understand the asymptotic behavior of empirical risk minimization, and to understand the design tradeoffs in choosing the loss function more broadly, we may ask what property the loss elicits.
Here a property is a functional assigning a value, or vector of values, to each distribution, and a loss elicits a property if for each distribution, the property value uniquely minimizes the expected loss.
The study of which properties are elicitable thus addresses which statistics are computable via empirical risk minimization \citep{steinwart2008support,steinwart2014elicitation,agarwal2015consistent,frongillo2015vector}.

The literature on property elicitation takes its roots in statistics \citep{savage1971elicitation,osband1985providing,gneiting2007strictly,gneiting2011making}, branching more recently into machine learning \citep{abernethy2012characterization,steinwart2014elicitation,agarwal2015consistent,frongillo2015vector}, economics \citep{lambert2018elicitation,lambert2009eliciting}, and finance \citep{emmer2015what,bellini2015elicitable,ziegel2016coherence,wang2015elicitable,fissler2016higher}.
A line of work initiated by Savage~\citeyearpar{savage1971elicitation} looks at questions of characterization: which losses elicit the mean of a distribution, or more generally the expectation of a vector-valued random variable \citep{banerjee2005optimality,frongillo2015vector}, and which real-valued properties are elicitable \citep{lambert2008eliciting,steinwart2014elicitation,lambert2018elicitation}.
Apart from special cases, the characterization of elicitable vector-valued properties remains open, with only partial progress \citep{frongillo2015vector,agarwal2015consistent,fissler2016higher,fissler2019order}.
A recent parallel thread of research in finance seeks to understand which financial risk measures, among several in use or proposed to help regulate the risks of financial institutions, are elicitable; cf.\ references above.
More often than not, these works conclude that risk measures are not elicitable \citep{gneiting2011making,wang2015elicitable,wang2018risk}, with notable exceptions being generalized quantiles, e.g., value-at-risk and expectiles, and expected utility \citep{ziegel2016coherence,bellini2015elicitable}.

All through the literature on property elicitation, one question is central: which properties are elicitable?
Yet it is clear that all properties are ``indirectly'' elicitable if one first elicits the entire distribution using a standard proper scoring rule \citep{gneiting2007strictly}.
Hence, if a statistical property is found not to be elicitable, such as the variance, rather than abandoning it one may ask how many dimensions are required to elicit it.
In the present work, we thus ask the more nuanced question: \myemph{how} elicitable are properties?
Specifically, we adapt and generalize the notion of \myemph{elicitation complexity} introduced by Lambert et al.~\citeyearpar{lambert2008eliciting}, which captures how many prediction dimensions one needs in empirical risk minimization for the property in question.
In particular, upper bounds on elicitation complexity often give statistically consistent surrogate losses for a given property of interest.  Both upper and lower bounds address the dimension of the range of the intermediate hypothesis needed for this indirect elicitation; see \S~\ref{sec:erm}.

Our main result gives tight bounds on elicitation complexity for a large class of risk measures.
This result is heavily inspired by recent work of Fissler and Ziegel~\citeyearpar{fissler2016higher}, showing that spectral risk measures of support $k$ have elicitation complexity at most $k+1$.
Spectral risk measures, which include conditional value at risk (CVaR), also known as expected shortfall, are among those under consideration in the finance community.
Their result shows that, while not elicitable in the classical sense, the elicitation complexity of spectral risk measures is still low, and hence one can develop reasonable regression and ``backtesting'' procedures for them \citep{fissler2016expected,rockafellar2018superquantile/cvar}.
Our results extend to these and many other risk measures (\S~\ref{sec:ex-es}--\ref{sec:ex-rockafellar}), often providing matching lower bounds on the complexity as well.
Other related work has appeared in machine learning, giving what could be considered bounds on elicitation complexity with respect to linear and convex-elicitable properties \citep{ramaswamy2013convex,agarwal2015consistent}; see \S~\ref{sec:classes-props},\,\S~\ref{sec:discussion}.

Our contributions are the following.
We introduce a general definition of elicitation complexity with respect to a given class of properties, which is flexible enough to capture previous definitions in the literature, yet brings several advantages (\S~\ref{sec:setting-elic-complex}; \S~\ref{sec:comp-other-defin}).
Our main result gives matching upper and lower bounds on elicitation complexity for the broad class of \myemph{Bayes risks}, the optimal expected loss as a function of the underlying distribution (\S~\ref{sec:main-result}).
We then apply this result to several settings of interest, including entropy and norms of distributions, financial risk measures, and empirical risk minimization (\S~\ref{sec:elic-complex-exampl-appl}).
We provide a foundation for the more general study of elicitation complexity by establishing bounds for several basic properties such as expectations and quantiles, as well as results on how elicitation complexity behaves with respect to various operations (\S~\ref{sec:basic-results}).
We then prove our main results (\S~\ref{sec:elic-complex-bayes-risk}) and discuss various open questions (\S~\ref{sec:discussion}).

\section{Setting and Main Result}
\label{sec:elic-complex-setting}

\subsection{Preliminaries}

Let $\Y$ be a set of outcomes and $\P$ be a convex set of probability measures on $\Y$.
See \S~\ref{sec:extens-non-conv} for when the convexity assumption can be lifted.
The goal of elicitation is to learn something about the distribution $p \in \P$, specifically some function or \myemph{property} $\Gamma(p)$ such as the mean or variance, by minimizing a loss function.
When $\Y = \reals^k$, we will assume the Borel $\sigma$-algebra, and when $\Y$ is generic, the $\sigma$-algebra will be left implicit, but the relevant functions need to be measurable and $\P$-integrable, i.e., integrable with respect to each $p \in \P$.
Throughout, we will use $Y$ as the random variable representing the outcome itself, i.e.\ $Y:\Y\to\Y$, $y\mapsto y$, leaving $X$ to refer to an arbitrary random variable.

\begin{remark}\label{remark:notation}
When $\Y=\reals$, it would be more natural in many cases to discuss properties of random variables of the form $Y:\Omega\to\Y$, such as $\Gamma(Y) = \E[Y]$, where now $\Omega$ is the outcome set endowed with some fixed base measure $\mu$, thus eliminating the need for $p$.
In most examples, such as all risk measures discussed in this paper, $\Gamma$ would depend on $Y$ only through its law, in which case it is also natural to design loss functions which depend only on $y = Y(\omega)$ rather than allowing them direct access to $\omega\in\Omega$.
Thus, without loss of generality we could define $\Gamma(p) \defeq \Gamma(Y)$ where $p$ is the law of $Y$, and let the outcome set again be $\Y$, and $Y$ be the identity map; e.g.\ $\Gamma(p) = \E_p[Y]$.
This transformation is the reasoning behind the notation in this paper.
\end{remark}

With notation in hand, we can now introduce our central object of study, a property.
\begin{definition}
  \label{def:property}
  Let $\R$ be a nonempty set of \myemph{reports}. 
  A \myemph{property} is a functional $\Gamma : \P \to \R$, which associates a desired report value to each distribution.
  The \myemph{level set} $\Gamma_r \defeq \{p\in \P \,|\, r=\Gamma(p)\}$ is the set of distributions $p$ corresponding to report value $r\in\R$.
  A \myemph{set-valued property} is a functional $\Gamma : \P \to 2^\R$, where $2^\R$ denotes the powerset of $\R$.
\end{definition}

Given a property $\Gamma$, we are interested in the existence of a \myemph{loss function} whose expectation under $p$ is minimized by $\Gamma(p)$.
A loss function can be thought of as incentivizing a risk-neutral agent to reveal the correct value of the property according to their private belief.

\begin{definition}
  \label{def:loss-elicits}
  A \myemph{loss function}, or simply \myemph{loss}, is a function $L:\R\times\Y\to\reals$ such that $L(r,\cdot)$ is $\P$-integrable for all $r\in\R$.
  A loss $L$ \myemph{elicits} a property $\Gamma:\P\to \R$ if for all $p\in\P$,
  $\{\Gamma(p)\} = \argmin_{r} L(r,p)$,
  where $L(r,p) \defeq \E_p[L(r,Y)]$.
  A property is \myemph{elicitable} if some loss elicits it.
  If we instead have $\Gamma(p) \in \argmin_{r} L(r,p)$ for all $p\in\P$, we say $L$ \myemph{weakly elicits} $\Gamma$.
\end{definition}
For example, when $\Y=\reals$, the mean $\Gamma(p) = \E_p[Y]$ is elicitable via squared loss $L(r,y) = (r-y)^2$, provided the relevant expectations are finite.
While a constant loss function weakly elicits every property, and thus weak elicitability is trivial, it can be useful to discuss the set of losses weakly eliciting a property, as in Theorem~\ref{thm:elic-minimum}.

When $\Gamma$ is set-valued, we say $L$ elicits $\Gamma$ if $\Gamma(p) = \argmin_r L(r,p)$, i.e., the set of minimizers of the expected loss is given by $\Gamma$ \citep{frongillo2015vector}.
For example, the median can be set-valued, such as for distributions with disconnected support, and is elicited by $L(r,y) = |r-y|$ in the above sense.
Rather than developing the notation needed to compose set-valued maps to define elicitation complexity for these general properties, we instead refer to set-valued properties only when needed, notably in Theorem~\ref{thm:elic-minimum} and \S~\ref{sec:ex-modal-mass}, and otherwise assume single-valued properties.

\subsection{Elicitation Complexity}
\label{sec:setting-elic-complex}

To motivate elicitation complexity, consider the well-known necessary condition for elicitability, that the level sets of the property be convex.
\begin{proposition}[Osband~\citeyearpar{osband1985providing}]
  \label{prop:elic-complex-level-sets-convex}
  If $\Gamma$ is elicitable, the level sets $\Gamma_r$ are convex for all $r\in\Gamma(\P)$.
\end{proposition}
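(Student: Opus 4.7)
The plan is a direct argument using linearity of expectation together with the defining optimization property of an eliciting loss. Fix $r \in \Gamma(\P)$ and suppose $L$ elicits $\Gamma$. Take any $p, q \in \Gamma_r$ and any $\lambda \in [0,1]$, and set $p_\lambda \defeq \lambda p + (1-\lambda) q$. Because $\P$ is assumed to be convex, $p_\lambda \in \P$, so it makes sense to ask for $\Gamma(p_\lambda)$. My goal is to show $\Gamma(p_\lambda) = r$, which would put $p_\lambda \in \Gamma_r$ and thus establish convexity.

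The key observation is that the map $r' \mapsto L(r', p_\lambda) = \E_{p_\lambda}[L(r', \cdot)]$ is, by linearity of expectation, exactly the convex combination $\lambda L(r', p) + (1-\lambda) L(r', q)$ of the two objectives on $p$ and $q$. Since $L$ elicits $\Gamma$ and $p, q \in \Gamma_r$, the point $r$ is (uniquely) the argsup of both $L(\cdot, p)$ and $L(\cdot, q)$. Therefore for any $r' \ne r$,
\[
L(r', p_\lambda) = \lambda L(r', p) + (1-\lambda) L(r', q) < \lambda L(r, p) + (1-\lambda) L(r, q) = L(r, p_\lambda),
\]
with the strict inequality coming from strict suboptimality in at least one of the two terms (whichever has strictly positive weight). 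Hence $r$ is the unique argsup of $L(\cdot, p_\lambda)$, so $\Gamma(p_\lambda) = r$ and $p_\lambda \in \Gamma_r$.

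There is no real obstacle beyond being careful about the interpretation of ``$\Gamma(p) = \argsup_{r} L(r,p)$'' as singling out a unique maximizer, so that the strict inequality above actually holds; if one instead reads $\argsup$ set-theoretically, the same argument gives $r \in \argsup L(\cdot, p_\lambda)$ and one then needs the (standard) assumption that $\Gamma(p_\lambda)$ is well-defined as a single report to conclude $\Gamma(p_\lambda) = r$. Either way, the heart of the proof is the one-line use of linearity of expectation to transport the optimality of $r$ from the endpoints $p, q$ to the mixture $p_\lambda$.
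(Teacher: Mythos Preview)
Your argument is correct and is the standard one. Note, however, that the paper does not actually supply a proof of this proposition: it is stated with attribution to Osband and used as a known background fact. So there is no ``paper's own proof'' to compare against; your direct argument via linearity of expectation is exactly the usual justification and would serve perfectly well as the proof the paper omits. One minor comment: since $r$ is the \emph{unique} $\argsup$ for both $p$ and $q$, the inequality $L(r',p) < L(r,p)$ (and likewise for $q$) is strict in \emph{both} terms whenever $r' \neq r$, so your hedge ``in at least one of the two terms'' is unnecessary.
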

\noindent
This condition is not sufficient; for example, the mode has convex level sets but is not elicitable \citep{heinrich2013mode}.
As illustrated in Figure~\ref{fig:mean-squared-variance-lev-sets}(L,R), while the mean $\Gamma(p) = \E_p[Y]$ has convex level sets, the variance $\Var(p) = \E_p[(Y-\E_p[Y])^2]$ does not, and hence is not elicitable \citep{osband1985providing,lambert2018elicitation}.
Note however that writing $\Var(p) = \E_p[Y^2] - \E_p[Y]^2$ suggests the following approach: first elicit the property $\hat\Gamma(p) = (\E_p[Y],\E_p[Y^2])$, and then use this information to compute $\Var(p)$.
It is well-known \citep{savage1971elicitation,gneiting2011making} that such a $\hat\Gamma$ is elicitable as the expectation of a vector-valued random variable $\phi(y)=(y,y^2)$, using for example $L(r,y) = \|r-\phi(y)\|_2^2$.

\begin{figure}[!t]
  \centering
  \begin{tikzpicture} [scale=\tikzfigscale, thick, tdplot_main_coords]
    \coordinate (orig) at (0,0,0);

    \pgfmathsetmacro{\ph}{0.54}
    \pgfmathsetmacro{\pm}{0.23}
    \pgfmathsetmacro{\pl}{0.23}
    \coordinate (p) at (\ph,\pm,\pl);
    \coordinate[label=below:${y=-1\!\!\!\!}$] (h) at (1,0,0);
    \coordinate[label=below:${y=1}$] (m) at (0,1,0);
    \coordinate[label=above:${y=0}$] (l) at (0,0,1);

    \draw[simplex] (h) -- (m) -- (l) -- (h);

    \begin{scope}
      \clip (h) -- (m) -- (l) -- (h);
      \foreach \c in {0,0.05,...,1}
      {
        \pgfmathsetmacro\aa{\c}
        \coordinate (A) at ($(\aa,1-\aa,0)$);
        \coordinate (B) at ($(\aa,1-\aa,2)$);
        \draw[blue] (A) -- (B);
      }
      \draw[orange, very thick] (0.3,0.7,0) -- (0.3,0.7,2);
      \draw[orange, very thick] (0.7,0.3,0) -- (0.7,0.3,2);
    \end{scope}
    \node[inner sep=2pt] (levset1) at (0.3,0.7,-.3) {$\Gamma_{0.4}\!\!$};
    \node[inner sep=2pt] (levset2) at (0.7,0.3,-.3) {$\Gamma_{\!\!-0.4}\!\!\!\!$};
    \draw[-latex,thin] (levset2) edge[in=270,out=90] (0.7,0.3,0);
    \draw[-latex,thin] (levset1) edge[in=270,out=90] (0.3,0.7,0);
    \placefiglabel{(L)}
  \end{tikzpicture}
  \hfill
  \begin{tikzpicture} [scale=\tikzfigscale, thick, tdplot_main_coords]
    \coordinate (orig) at (0,0,0);

    \pgfmathsetmacro{\ph}{0.54}
    \pgfmathsetmacro{\pm}{0.23}
    \pgfmathsetmacro{\pl}{0.23}
    \coordinate (p) at (\ph,\pm,\pl);
    \coordinate[label=below:${y=-1\!\!\!\!}$] (h) at (1,0,0);
    \coordinate[label=below:${y=1}$] (m) at (0,1,0);
    \coordinate[label=above:${y=0}$] (l) at (0,0,1);

    \draw[simplex] (h) -- (m) -- (l) -- (h);

    \begin{scope}
      \clip (h) -- (m) -- (l) -- (h);
      \foreach \c in {0,0.05,...,1}
ppp      {
        \pgfmathsetmacro\aa{\c}
        \coordinate (A) at ($(\aa,1-\aa,0)$);
        \coordinate (B) at ($(\aa,1-\aa,2)$);
        \draw[blue] (A) -- (B);
      }
      \draw[orange, very thick] (0.3,0.7,0) -- (0.3,0.7,2);
      \draw[orange, very thick] (0.7,0.3,0) -- (0.7,0.3,2);
    \end{scope}
    \node (levset) at (0.5,0.5,-.3) {$\Gamma_{0.16}$};
    \draw[-latex,thin] (levset) edge[in=270,out=180] (0.7,0.3,0);
    \draw[-latex,thin] (levset) edge[in=270,out=0] (0.3,0.7,0);
    \placefiglabel{(M)}
  \end{tikzpicture}
  \hfill
  \begin{tikzpicture} [scale=\tikzfigscale, thick, tdplot_main_coords]
    \coordinate (orig) at (0,0,0);

    \pgfmathsetmacro{\ph}{0.54}
    \pgfmathsetmacro{\pm}{0.23}
    \pgfmathsetmacro{\pl}{0.23}
    \coordinate (p) at (\ph,\pm,\pl);
    \coordinate[label=below:${y=-1\!\!\!\!}$] (h) at (1,0,0);
    \coordinate[label=below:${y=1}$] (m) at (0,1,0);
    \coordinate[label=above:${y=0}$] (l) at (0,0,1);

    \draw[simplex] (h) -- (m) -- (l) -- (h);

    \begin{scope}
      \clip (h) -- (m) -- (l) -- (h);
      \foreach \c in {0,0.05,...,1}
      {
        \pgfmathsetmacro\aa{0.5*(sqrt(1-\c)+1)}
        \pgfmathsetmacro\bb{0.5*\c}
        \coordinate (A) at ($(\aa,1-\aa,0)$);
        \coordinate (B) at ($(1-\aa,\aa,0)$);
        \coordinate (C) at ($(\bb,\bb,1-2*\bb)$);
        \draw[blue] (C) parabola (A);
        \draw[blue] (C) parabola (B);
      }
      \def\c{0.8}
      \pgfmathsetmacro\aa{0.5*(sqrt(1-\c)+1)}
      \pgfmathsetmacro\bb{0.5*\c}
      \coordinate (A) at ($(\aa,1-\aa,0)$);
      \coordinate (B) at ($(1-\aa,\aa,0)$);
      \coordinate (C) at ($(\bb,\bb,1-2*\bb)$);
      \draw[orange, very thick] (C) parabola (A);
      \draw[orange, very thick] (C) parabola (B);
    \end{scope}
    \def\c{0.8}
    \pgfmathsetmacro\aa{0.5*(sqrt(1-\c)+1)}
    \pgfmathsetmacro\bb{0.5*\c}
    \node (levset) at (0.5,0.5,-.3) {$\Gamma_{0.8}$};
    \draw[-latex,thin] (levset) edge[in=270,out=180] ($(\aa,1-\aa,0)$);
    \draw[-latex,thin] (levset) edge[in=270,out=0] ($(1-\aa,\aa,0)$);
    \placefiglabel{(R)}
  \end{tikzpicture}
  \caption{Level sets for the mean, squared mean, and variance.
    For each we use outcome space $\Y=\{-1,0,1\}$, and depict the probability simplex projected into two dimensions.
    Thus, the point distribution $p$ with $\Pr[Y=0]=1$ lies at the top point of the triangle, and the uniform distribution in the center.
    The squared mean and variance are not elicitable, as evidenced by their non-convex level sets.
    \\
    \textbf{(L)}
    The level sets for $\Gamma(p) = \E_p[Y]$.
    \quad
    \textbf{(M)}
    The level sets for $\Gamma(p) = (\E_p[Y])^2$.
    For $r > 0$ each level set $\Gamma_r = \{p : \Gamma(p)=r\}$ consists of two disjoint line segments, corresponding to the sets $\{p : \E_p[Y] = \sqrt{r}\}$ and $\{p : \E_p[Y] = -\sqrt{r}\}$.
    The natural link function $f(r) = r^2$ from the mean, so that $\Gamma(p) = (\E_p[Y])^2 = f(\E_p[Y])$, can be thought of as combining level sets of $\E[Y]$ to form the level sets of $\E[Y]^2$.
    \quad
    \textbf{(R)}
    The level sets for $\Gamma(p) = \Var(p) = \E_p[Y^2]-\E_p[Y]^2$, which are non-convex.
}
  \label{fig:mean-squared-variance-lev-sets}
\end{figure}
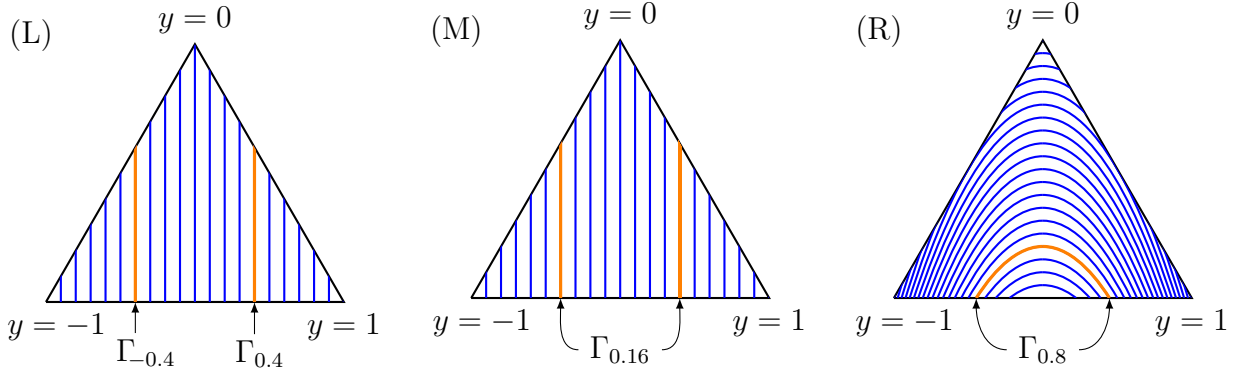

The above variance example suggests the notion of \myemph{indirect elicitation}, where we first elicit a ``intermediate'' property $\hat \Gamma$, and then use the resulting value to compute the desired property $\Gamma$.
We say a property is $k$-elicitable if it can be obtained as a function of a $k$-dimensional elicitable property.
We allow $k$ to be countably infinite, which we write $\countinf$ in lieu of the more precise countable cardinal $\aleph_0$.
The \myemph{elicitation complexity} of a property is then simply the minimum dimension $k$ needed for it to be $k$-elicitable.
Both of these definitions are only interesting when the intermediate property is restricted to some class of properties $\C$, such as those defined in \S~\ref{sec:classes-props}, as otherwise essentially all properties are 1-elicitable; see Remark~\ref{remark:everything-1-elic} in \S~\ref{sec:basic-results}.
For a discussion of other related definitions in the literature, see \S~\ref{sec:comp-other-defin}.
\begin{definition}
  \label{def:el}
  For $k\in\N\cup\{\countinf\}$, let $\EL_k(\P)$ denote the class of all elicitable properties $\Gamma:\P\to\reals^k$, and $\EL(\P) \defeq \bigcup_{k\in\N\cup\{\countinf\}} \EL_k(\P)$.
  When $\P$ is implicit we simply write $\EL$.
\end{definition}

\begin{definition}
  \label{def:elic-complex}
  Let $\C$ be a class of properties, and $k\in\N\cup\{\countinf\}$.
  A property $\Gamma:\P\to\R$ is \myemph{$k$-elicitable with respect to $\C$}
  if there exists an intermediate property $\hat \Gamma \in \C\cap\EL_k(\P)$ and map $f:\reals^k\to\R$ such that $\Gamma = f \circ \hat \Gamma$.
  The \myemph{elicitation complexity} of $\Gamma$ is $\elic_\C(\Gamma) = \min\{k: \Gamma \text{ is $k$-elicitable with respect to $\C$}\}$.
\end{definition}
\noindent
If no suitable property $\hat\Gamma$ for $\Gamma$ exists in $\C$, its elicitation complexity will be undefined.
To illustrate the definition, from the variance example above we have $\Gamma = \Var$, $\hat\Gamma: p \mapsto (\E_p[Y],\E_p[Y^2]) \in \reals^2$, and $f:(r_1,r_2) \mapsto r_2 - r_1^2$.
Hence, we conclude $\Var$ is 2-elicitable with respect to the class $\Clin$ of linear properties, i.e., expected values, which we define formally in \S~\ref{sec:classes-props}.
In particular, $\elic_\Clin(\Var) \leq 2$, meaning the elicitation complexity is at most 2.

\begin{remark}
  \label{remark:mean-squared}
  If a property is not elicitable, it can still be 1-elicitable, and thus we have not yet shown $\elic_{\C}(\Var) = 2$ for any $\C$.
  In other words, $\Gamma \notin \EL(\P)$ does not imply $\elic_\C(\Gamma) \geq 2$.
  As a simple example, consider the property $\Gamma(p) = (\E_p[Y])^2$, where $\Y = \{-1,0,1\}$.
  Clearly, the level sets of $\Gamma$ are not convex: $\Gamma((1,0,0)) = \Gamma((0,0,1)) = 1$ but $\Gamma((a,0,1-a)) < 1$ for all $0 < a < 1$; see
  Figure~\ref{fig:mean-squared-variance-lev-sets}(M).
  However, $\Gamma$ is easily indirectly elicited via $\hat\Gamma(p) = \E_p[Y] \in \reals^1$, with the simple link $f(r) = r^2$, and hence we conclude $\elic_{\C}(\Gamma) = 1$ whenever $\hat\Gamma \in \C$, such as $\C=\Clin$ being the set of linear properties, i.e., expected values.
  To show lower bounds for $\elic_\C$ we will need more tools, such as our main theorem below; see \S~\ref{sec:ex-variance} for the application to the variance.
\end{remark}

\subsection{Main Result}
\label{sec:main-result}

We now move to our main result, concerning properties that can be written as the \myemph{Bayes risk} of another loss function, the minimum possible expected loss as a function of the distribution $p$.
\begin{definition}
  \label{def:bayes-risk}
  Given loss function $L:\A\times\Y \to \reals$ for some report set $\A$, the \myemph{Bayes risk} of $L$ is defined as $\lbar(p) := \inf_{a \in \A} L(a,p)$.
\end{definition}
For example, the variance is the Bayes risk of squared loss $L(r,y) = (r-y)^2$, as we have $\lbar(p) = \min_{r\in\reals}\E_p[(r-Y)^2] = \E_p[(\E_p[Y]-Y)^2] = \Var(p)$.

Our main result gives a tight bound on the elicitation complexity of a Bayes risk.
Given a loss $L$, Theorem~\ref{thm:elic-minimum} states that its Bayes risk can be elicited jointly with the property $\Gamma$ it elicits, which implies $\elic_\C(\lbar) \leq \elic_\C(\Gamma) + 1$ whenever the pair $(\lbar,\Gamma)$ is an element of $\C$.
Theorem~\ref{thm:bayes-risk-lower-bound} gives a lower bound: for all $\C$, we have $\elic_\C(\lbar) \geq \elic_\C(\Gamma)$.
See \S~\ref{sec:elic-complex-bayes-risk} for proofs.

\begin{theorem}
  \label{thm:main-theorem}
  Let $L:\reals^k\times\Y \to \reals$ be a loss function eliciting $\Gamma:\P\to\reals^k$, $k\in\N\cup\{\infty\}$, and $\lbar$ be its Bayes risk.
  If $(\lbar,\Gamma)\in \C$ and $\elic_\C(\Gamma)=k$, then $\elic_\C(\lbar) \in \{k,k+1\}$.
  Moreover, the loss
  \begin{equation}
    \label{eq:elic-bayes-risk}
    L^*((r,a),y) = L'(a,y) + H(r) + h(r)(L(a,y) - r)
  \end{equation}
  elicits $\{\lbar,\Gamma\}$, where $h:\reals\to\reals_+$ is any positive strictly decreasing function, $H(r) = \int_0^r h(x) dx$, and $L'$ is any other loss weakly eliciting $\Gamma$.
\end{theorem}
One could easily lift the requirement that $\Gamma$ be a function, and allow $\Gamma(p)$ to be the set of minimizers of the loss \citep{frongillo2014general}; we will use this additional power in Example~\ref{sec:ex-spectral-risk}.

Meaningful applications of Theorem~\ref{thm:main-theorem} require a suitable choice of the class $\C$.
In general, the condition $(\lbar,\Gamma)\in\C$ will be true for sufficiently permissive $\C$, but the condition $\elic_\C(\Gamma)=k$ will only hold for sufficiently restrictive $\C$, and sufficiently rich $\P$.
Satisfying both conditions with the same $\C$ thus entails some understanding of the application at hand.
Before discussing several applications of Theorem~\ref{thm:main-theorem}, we first introduce the various property classes $\C$ we will focus on, and show that we can tighten our lower bound to $k+1$ for all these classes.

\subsection{Classes of Properties}
\label{sec:classes-props}

As we describe later in Remark~\ref{remark:everything-1-elic}, some restriction on $\C$ is necessary, as otherwise all properties would have complexity 1.
We focus in this paper on four natural choices of $\C$, all of interest in the machine learning literature, cf.~\citet{agarwal2015consistent}, with a discussion of other classes in \S~\ref{sec:discussion}.
Briefly, ordered from most restrictive to least restrictive, the four classes we consider are the properties which are: linear / expected values ($\Clin$), elicited by strongly convex losses ($\Cstrong$), elicited by smooth strictly convex losses ($\Cstrict$), and identifiable ($\I$).
The desired class $\C$ may depend on applications; e.g.,\ strong convexity leads to favorable optimization rates and generalization bounds for empirical risk minimization.
We now define these classes formally, beginning with the notion of identifiability.

We saw from Proposition~\ref{prop:elic-complex-level-sets-convex} that elicitable properties have convex level sets.
The class $\I$ of identifiable properties satisfy a stronger condition: not only must the level sets be convex, but they must be the intersection of a linear subspace with $\P$.
These linear subspaces are encoded by an \myemph{identification function} \citep{osband1985providing,lambert2008eliciting,steinwart2014elicitation}.
The definition we adopt corresponds to a ``strong'' identification function from Steinwart et al.~\citeyearpar{steinwart2014elicitation}.
\begin{definition}
  \label{def:prop-1}
  For $k\in\N\cup\{\countinf\}$, a $\P$-integrable function $V:\R\times\Y\to\reals^k$ is an \myemph{identification function} for $\Gamma:\P\to\reals^k$, or \myemph{identifies} $\Gamma$, if for all $r\in\Gamma(\P)$,  $p \in \Gamma_r \iff V(r,p) = 0 \in \reals^k$, where as with $L(r,p)$ above we write $V(r,p) \defeq \E_p[V(r,Y)]$.  $\Gamma$ is \myemph{identifiable} if some $V$ identifies it.
\end{definition}
\begin{definition}
  \label{def:id}
  For $k\in\N\cup\{\countinf\}$, let $\ID_k(\P)$ denote the class of all identifiable properties $\Gamma:\P\to\reals^k$, and $\ID(\P) \defeq \bigcup_{k\in\N\cup\{\countinf\}} \ID_k(\P)$.  When $\P$ is implicit we simply write $\I$.
\end{definition}
For example, $V(r,y) = y - r$ identifies the mean $\Gamma(p) = \E_p[Y]$.
More generally, the expected value $\Gamma(p) = \E_p[\phi(Y)]$ of some $\phi:\reals\to\reals^k$ has identification function $V(r,y) = r - \phi(y)$.
Similarly, when single-valued, the $\alpha$-quantile $q_\alpha(p)$, $\alpha\in(0,1)$, is identified by $V(r,y) = \ones_{Y\leq r} - \alpha$.
We may extend Definition~\ref{def:id} when $\Gamma$ is set-valued, with $\Gamma(\P)$ denoting the union of $\Gamma(p)$ for all $p\in\P$.

We now define the other three classes of properties.
Recall that a differentiable function $G:A\to\reals$ is $\mu$-strongly convex if for all $x,y\in A$ we have $\mu\|x-y\|^2 \leq (\nabla G(x) - \nabla G(y))\cdot(x-y)$.
\begin{definition}
  Let $\Clin$ denote the class of bounded linear properties, i.e.,\ those of the form $\Gamma:p\mapsto \E_p[\phi(Y)]$ for some $\P$-integrable $\phi:\Y\to\reals^k$, $k\in\N\cup\{\countinf\}$, where $\R:=\Gamma(\P)\subseteq\reals^k$ is a bounded set.
  When $k=\infty$, we use $\|\cdot\|_2$ and the Fr\'echet derivative; see \S~\ref{sec:proof-nested}.
  Let $\Cstrict$ denote the class of bounded properties $\Gamma:\P\to\R$ elicited by a loss function which is differentiable, Lipschitz-continuous, and strictly convex in the first argument.
  The class $\Cstrong \subseteq \Cstrict$ further requires the loss to be strongly convex in the first argument.
\end{definition}

As alluded to above, our four classes are nested, and each complexity therefore lower bounds the next.
We only have $\Cstrict \subseteq \I$ because we require differentiability in $\Cstrict$; removing this restriction and studying general convex losses is an important future direction (\S~\ref{sec:discussion}).
\begin{proposition}
  \label{prop:nested-convex-classes-long}
  We have $\Clin \subseteq \Cstrong \subseteq \Cstrict \subseteq \I$.
  In particular, for all properties $\Gamma$, we have $\elici(\Gamma) \leq \elic_\Cstrict(\Gamma) \leq \elic_\Cstrong(\Gamma) \leq \elic_\Clin(\Gamma)$.
\end{proposition}
The proof is straightforward (\S~\ref{sec:proof-nested}), although some care is needed in the case $k = \infty$.
We will use these relationships extensively when applying our results.
In particular, lower bounds for less restrictive classes like $\I$ are stronger, whereas upper bounds for more restrictive classes like $\Clin$ are stronger.
Moreover, as we will prove in \S~\ref{sec:elic-complex-bayes-risk}, all of the classes we consider admit a tighter lower bound of $\elic_\C(\lbar) \geq k+1$, which gives equality in light of Theorem~\ref{thm:main-theorem}.  This tighter lower bounds lower bound relies on $\P$ being sufficiently rich.  The following provides a sufficient condition.

\begin{condition}\label{cond:v-interior}
  Let $\Gamma\in\I_k(\P)$ and $r\in\Gamma(\P)$ be given.
  There exists some identification function $V:\Gamma(\P)\times\Y\to\reals^k$ identifying $\Gamma$ such that $0\in\interior\{V(r,p) : p\in\P\}$.
\end{condition}

Condition~\ref{cond:v-interior} is a weaker version of Assumption V1 of Fissler and Ziegel~\citeyearpar{fissler2016higher} as ours holds for a particular $r$ while theirs uses a universal quantifier over $r$ in the interior of $\Gamma(\P)$.
As they point out through a number of examples, such conditions are frequent in the literature on elicitation.
With this condition, we can state the tighter bound.

\begin{proposition}
  \label{prop:tighter-lower-bound}
  Let $L:\reals^k\times\Y \to \reals$ be a loss eliciting $\Gamma \in \I_k$, $k\in\N$.
  If $\Gamma$ satisfies Condition~\ref{cond:v-interior} for some $r\in\Gamma(\P)$, and $\lbar$ is non-constant on $\Gamma_r$, then $\elici(\lbar) = k+1$.
  If additionally $(\lbar,\Gamma)\in\C$ for some $\C \subseteq \I$, then $\elic_\C(\lbar) = k+1$.
\end{proposition}

\section{Examples and Applications}
\label{sec:elic-complex-exampl-appl}
\label{SEC:ELIC-COMPLEX-EXAMPL-APPL} 

\subsection{Preliminaries}

We now give several applications of our theorem.
Several upper bounds are novel, as well as all lower bounds greater than $1$.
Unless stated otherwise we will take $\Y=\reals$.  In each setting, we also make several standard regularity assumptions which we suppress for ease of exposition; for example, for the variance and variantile we assume finite first and second moments.
All applications also require $\P$ to be ``sufficiently rich'' in some sense, typically to establish $\elic_\C(\Gamma)=k$, which is often a light restriction.
For example, in many cases our results hold for any $\P$ containing the set $\gaussians$ of all finite mixtures of Gaussian distributions.
We will defer these richness conditions to the following section, in particular Conditions~\ref{cond:quantiles} and~\ref{cond:linear}, and instead refer to the results that use these definitions to establish basic complexity bounds, such as Lemmas~\ref{lem:complex-whole-dist} and~\ref{lem:elic-complex-quantiles}.
For omitted proofs and other details, see Appendix~\ref{sec:omitted-appl}.

\subsection{Variance}
\label{sec:ex-variance}

Following Definition~\ref{def:elic-complex}, we noted that the variance is a function of the first and second moment, which are both linear properties, giving us $\elic_\Clin(\Var)\leq 2$.
As a warm up, let us see how to apply our main theorem to recover this statement together with a matching lower bound.
As we saw above, we can view the variance as the Bayes risk of squared loss $L(r,y) = (r-y)^2$, which of course elicits the mean.
As the mean is identifiable, and the variance is not simply a function of the mean, Proposition~\ref{prop:tighter-lower-bound} gives $\elic_\I(\Var)=2$.
Furthermore, we can directly establish $\elic_\Clin(\Var) \leq 2$.
Letting $\hat\Gamma(p) = \{\E_p[Y],\E_p[Y^2]\}$ be the first and second moment, we have $\hat\Gamma\in\Clin$ and $\Var = f \circ \hat\Gamma$ for $f : (r_1,r_2) \mapsto r_2 - r_1^2$.
Proposition~\ref{prop:nested-convex-classes-long} then gives $\elic_\C(\Var)=2$ for any class $\C$ between $\Clin$ and $\I$, including all $\C\in\{\Clin,\Cstrong,\Cstrict,\I\}$.

\begin{corollary}
  \label{cor:variance}
  Let $\P$ contain $\gaussians$, or any set of distributions such that (i) Condition~\ref{cond:v-interior} holds for the mean $\Gamma:p\mapsto \E_p[Y]$ and some $r\in\reals$, and (ii) there are two distributions with mean $r$ but different variances.
  Then $\elic_\C(\Var)=2$ for all $\Clin \subseteq \C \subseteq \I$.
\end{corollary}

With the variance we can observe that our Theorem~\ref{thm:main-theorem} does not always give a full characterization of loss functions eliciting $(\lbar,\Gamma)$.
For $(\Var,\E[Y])$, while Theorem~\ref{thm:main-theorem} generates losses such as $L^*((r,a),y) = e^{-r}((a-y)^2-r)-e^{-r}$, there are losses which cannot be represented by the form~\eqref{eq:elic-bayes-risk}.
Perhaps the most natural example is the following,
\begin{equation}
  L^*((r,a),y) = (a-y)^2 + (r+a^2-y^2)^2~,\label{eq:brier-brier}
\end{equation}
which is given by applying the invertible link function $(m_1,m_2)\mapsto(m_1,m_2-m_1^2)$ to the loss $\hat L((m_1,m_2),y) = (m_1-y)^2 + (m_2 - y^2)^2$, which elicits $\hat\Gamma$ above.
Finally, one may be tempted to nest squared loss $L^*((r,a),y) = ((a-y)^2-r)^2$, which is similar to eq.~\eqref{eq:brier-brier}, but even after removing the $(a-y)^4$ term this loss fails because the coefficient of $(a-y)^2$ is negative.

\subsection{Entropy and Norms}
\label{sec:ex-cvx-ftns-of-means}

To demonstrate the ability of our framework to show that some properties of interest are inherently hard to elicit, consider eliciting the entropy or a norm of a distribution.
Both are used as measures of information or non-uniformity, and in their relative forms as measures of distance.
We show that these have maximum elicitation complexity, meaning there is no better way to elicit them than to first elicit the full distribution.
This result is a consequence of a more general characterization of the elicitation complexity of properties which can be written as the Bayes risk of a loss eliciting a linear property, i.e., an expectation.

The notion of entropy, as measuring disorder, randomness, information, etc., appears throughout the sciences.
As a function of a distribution over $\Y=\reals$ admitting a continuous density $p$, some standard examples include 
Shannon entropy $H(p) = -\int_\Y p(y) \log p(y) dy$, Tsallis/Havrda--Charvát entropy $H_{HC}(p) = \tfrac 1 {1-\alpha} (1-\int_\Y p(y)^\alpha dy)$ for $\alpha \neq 1$, and Rényi entropy $H_R(p) = \tfrac 1 {1-\alpha} \log(\int_\Y p(y)^\alpha dy)$ for $\alpha\geq 0, \alpha\neq 1$.
Each concave entropy function also gives rise to a corresponding entropy relative to some other distribution $q$, the most common example being Kullback--Leibler divergence $D_{\text{KL}}(p\parallel q)=\int _{-\infty }^{\infty }p(x)\log \frac {p(x)}{q(x)}\,dx$.
Similarly, norms of distributions are ubiquitous, such as the standard $\|p\|_\beta = (\int_\Y p(y)^\beta dy)^{1/\beta}$ for $\beta > 0$, and are used in their relative forms as measuring distance from some other distribution $q$.
When $|\Y|<\infty$, we simply replace integrals with sums, so that $H(p) = -\sum_{y\in\Y} p(y) \log p(y)$ and $\|p\|_\beta = (\sum_{y\in\Y} p(y)^\beta)^{1/\beta}$.

Essentially all of these entropies and norms have maximal elicitation complexity, being as hard to elicit as the distribution itself, i.e., the property $\idprop:p\mapsto p$.
From standard results in proper scoring rules \citep{gneiting2007strictly}, any strictly concave function $G:\P\to\reals$ is the Bayes risk $\lbar(p) = \E_p L_G(p,Y)$ of some strictly proper loss $L_G$ which elicits $\idprop$.
For example, Shannon entropy is the Bayes risk of log loss, $L(p,y) = -\log p(y)$, which elicits $\idprop$.
Moreover, under suitable richness conditions, we have $\elic_\C(\idprop) = \infty$ for all $\Clin\subseteq\C\subseteq\I$ by Lemma~\ref{lem:complex-whole-dist}, or $\elic_\C(\idprop) = |\Y|-1$ when $\Y$ is a finite set.
Finally, since clearly $G = G \circ \idprop$, the result developed for our main lower bound, from Theorem~\ref{thm:bayes-risk-lower-bound}, gives
$\elic_\C(\lbar) = \elic_\C(\idprop)$ for all $\C$.

\begin{corollary}
  \label{cor:entropy-norm}
  Let $\C$ satisfy $\Clin\subseteq\C\subseteq\I$, and let $G:\P\to\reals$ be strictly convex.
  Then $\elic_C(G) = \elic_\C(\idprop)$.
  If $|\Y|<\infty$ and $\P$ is the probability simplex, $\elic_\C(G) = |\Y|-1$.
  If $\Y = \reals$ and $\P$ is a convex family of Lebesgue densities satisfying appropriate richness conditions, as in Lemma~\ref{lem:complex-whole-dist}, then
  $\elic_\C(G) = \countinf$.
\end{corollary}

Corollary~\ref{cor:entropy-norm} applies to the entropies and norms above when we choose parameters making them strictly concave or convex, namely $\alpha < 1$ for $H_{HC}$, $\alpha\neq 1$ for $H_R$, and $\beta>1$ \citep{rao1984convexity}.
The result generalizes to any strictly convex function of expected values, as outlined in \S~\ref{sec:conv-funct-means}.
See also \S~\ref{sec:comp-other-defin} for a related discussion of multi-observation losses~\citep{casalaina-martin2017multi}.

\subsection{Expected Shortfall, Spectral Risk Measures, and Range Value at Risk}
\label{sec:ex-es}
\label{sec:ex-spectral-risk}

One important application of our results on the elicitation complexity of the Bayes risk is the elicitability of various financial risk measures.  One of the most popular financial risk measures is \myemph{expected shortfall} $\ES_\alpha:\P\to\reals$, also called \myemph{conditional value at risk (CVaR)} or \myemph{average value at risk (AVaR)}, which we define as follows; cf.~\citet[eq.(18)]{follmer2015axiomatic}, \citet[eq.(3.21)]{rockafellar2013fundamental}.
\begin{align}
  \ES_\alpha(p)
  &= \inf_{z\in\reals}\left\{ \E_p\left[ \tfrac 1 \alpha(z-Y)\ones_{z\geq Y}- z\right]\right\}
  = \inf_{z\in\reals}\left\{ \E_p\left[ \tfrac 1 \alpha(z-Y)(\ones_{z\geq Y}-\alpha) -Y\right]\right\}~.
  \label{eq:elic-complex-es-3}
\end{align}
We will assume $\Y=\reals_+$, the nonnegative reals, and restrict $\alpha\in(0,1)$; see below for $\alpha=1$.
Despite the importance of elicitability to financial regulation \citep{emmer2015what,fissler2016expected}, $\ES_\alpha$ is not elicitable \citep{gneiting2011making}.
It was recently shown by Fissler and Ziegel~\citeyearpar{fissler2016higher}, however, that $\elici(\ES_\alpha) \leq 2$.  They also consider the broader class of {\em spectral risk measures}, which can be represented as
$\rho_\mu(p) = \int_{(0,1)} \ES_\alpha(p) d\mu(\alpha)$,
where $\mu$ is a probability measure on $(0,1)$; cf. \citet[eq.~(36)]{follmer2015axiomatic}.
In the case of finite support, $\mu = \sum_{i=1}^k \beta_i \delta_{\alpha_i}$, for distinct point distributions $\delta_{\alpha_i}$, $\beta_i > 0$, we can rewrite $\rho_\mu$ using the above as:
\begin{align}
  \rho_\mu(p)
  &= \sum_{i=1}^k \beta_i \ES_{\alpha_i}(p)
    = \inf_{z\in\reals^k} \left\{ \E_p\left[ \sum_{i=1}^k \frac{\beta_i}{\alpha_i} (z_i-Y)(\ones_{z_i\geq Y}-\alpha_i) -Y\right]\right\}~.
  \label{eq:elic-complex-spec-1}
\end{align}
Fissler and Ziegel then conclude $\elici(\rho_\mu) \leq k+1$.

We show how to recover these results as well as matching lower bounds.
Let $\Pquant$ be the set of probability measures over $\reals$ with single-valued quantiles in the range $(0,1)$, i.e., supported on an interval and whose CDFs are strictly increasing on that interval.
It is well-known that the infimum in eq.~\eqref{eq:elic-complex-spec-1} is attained by the $k$ distinct quantiles $q_{\alpha_1}(p),\ldots,q_{\alpha_k}(p)$.
Thus, we may express $\rho_\mu$ as a Bayes risk; in particular, $\rho_\mu(p) = \lbar(p)$ for the the loss $L:\reals^k\times\reals_+$ given by
\begin{align}
  L(z,y) &= \sum_{i=1}^k \frac{\beta_i}{\alpha_i} (z_i-y)(\ones_{z_i\geq Y}-\alpha_i) -y~,
  \label{eq:loss-quantiles}
\end{align}
which elicits $\Gamma(p) = \{q_{\alpha_1}(p),\ldots,q_{\alpha_k}(p)\}$.
As $\Gamma$ is identifiable by assumption on $\P$, and we have $\elici(\Gamma) = k$ when $\P$ is sufficiently rich, as in Lemma~\ref{lem:elic-complex-quantiles},
Proposition~\ref{prop:tighter-lower-bound} gives us $\elici(\rho_\mu)= k+1$.
In particular, the property $\{\rho_\mu,q_{\alpha_1},\ldots,q_{\alpha_k}\}$ is elicitable.
Moreover, in \S~\ref{sec:elic-complex-loss-expect-shortf} we show that the family of losses from Theorem~\ref{thm:main-theorem} coincide with the characterization of Fissler and Ziegel~\citeyearpar{fissler2016higher}.

\begin{corollary}
  \label{cor:spectral-risks}
  Let $\P\subseteq\Pquant$ be sufficiently rich, as in Lemma~\ref{lem:elic-complex-quantiles}, and contain all mixtures of Pareto distributions, or any set of distributions where there are at least two possible $\rho_\mu$ values for a given vector of quantiles $q_{\alpha_1}(p),\ldots,q_{\alpha_k}(p)$.
  Then $\elici(\rho_\mu) = k+1$.
\end{corollary}

Unlike the previous examples, here we only have a tight result when $\C=\I$.
While we have $\elic_\C(\rho_\mu) \geq k+1$ for any $\C \subseteq \I$, including the classes $\Cstrict$, $\Cstrong$, and $\Clin$, the upper bound $\elic_\C(\rho_\mu) \leq k+1$ only holds for $\C = \I$ among these four classes.
The reason for this difference is simply the fact that the losses from Theorem~\ref{thm:main-theorem} are not strictly convex, and thus the condition $(\rho_\mu,\Gamma)\in\C$ is not established for $\C \subseteq \Cstrict$.

\begin{remark}
\label{remark:spectral-risk-bound}
When $\alpha=1$, we have $\ES_1(p) = \E_p[-Y]$, and thus $\elici(\rho_\mu)=1$ for $\mu(\{1\})=1$.
Moreover, when $\mu(\{1\})\in(0,1)$, we simply replace the loss in eq.~\eqref{eq:loss-quantiles} by $L(z,y) = \sum_{i=1}^{k-1} \frac{\beta_i}{\alpha_i} (z_i-y)(\ones_{z_i\geq Y}-\alpha_i) -(1+\beta_k) y$, yielding a bound of $\elici(\rho_\mu)=k$ when $\mu(\{1\}) > 0$, as opposed to $\elici(\rho_\mu)=k+1$ when $\mu(\{1\}) = 0$; cf.~\citet[Corollary 5.4(ii)]{fissler2016higher}.
\end{remark}

Finally, concurrent with our work, Fissler and Ziegel~\citeyearpar{fissler2019elicitability} give a result for \myemph{Range Value at Risk (RVaR)}, which motivates a certain generalization of our upper bound, Theorem~\ref{thm:elic-minimum}.
Thought of as a compromise between VaR and ES, RVaR is defined as follows for $0 <\alpha <\beta <1$,
\begin{equation}
  \label{eq:rvar-es}
  \mathrm{RVaR}_{\alpha,\beta}(p) :=
  \frac{1}{\beta-\alpha} \int_\alpha^\beta \mathrm{VaR}_\lambda(p)d\lambda
  =
  \frac{\beta \ES_\beta(p) - \alpha \ES_\alpha(p)}{\beta-\alpha}
  ~,
\end{equation}
where the second equality holds whenever the right-hand side is defined \citep{fissler2019elicitability}.
While $\ES$ is a Bayes risk, as noted above, the form~\eqref{eq:rvar-es} is a difference of Bayes risks and thus Theorem~\ref{thm:main-theorem} does not apply.
The discussion above on the complexity of $\ES$, together with Lemma~\ref{lem:complex-sub-add} below on the subadditivity of $\elic_\C$, still gives $\elici(\mathrm{RVaR}_{\alpha,\beta}) \leq \elici(\ES_\alpha) + \elici(\ES_\beta) = 4$,
which the authors note has been observed and used in practice; specifically, the quadruple $(\mathrm{VaR}_\alpha,\mathrm{VaR}_\beta,\ES_\alpha,\ES_\beta)$ is elicitable.
The authors improve on this complexity by showing that $(\mathrm{VaR}_\alpha,\mathrm{VaR}_\beta,\mathrm{RVaR}_{\alpha,\beta})$ is elicitable, so that $\elici(\mathrm{RVaR}_{\alpha,\beta}) \leq 3$.
See Wang and Wei~\citeyearpar{wang2018risk} for a perspective on this result in the broader context of signed Choquet integrals.

This interesting case gives rise to a generalization of the upper bound from Theorem~\ref{thm:main-theorem}: linear combinations of Bayes risks are elicitable along with the corresponding properties.
The proof (\S~\ref{sec:proof-lin-comb-bayes-risks}) adapts Theorem~\ref{thm:main-theorem} with additional terms to account for possibly negative coefficients.

\begin{theorem}\label{thm:lin-comb-bayes-risk}
For each $i\in\{1,\ldots,m\}$ let $L_i:\reals^{k_i}\times\Y \to \reals$ be a loss eliciting $\Gamma_i:\P\to\reals^{k_i}$, with Bayes risk $\lbar_i$.
Let $\gamma(p) = \sum_{i=1}^m \alpha_i \lbar_i(p)$ for $\alpha_i \in \reals\setminus\{0\}$.
Then $\{\gamma,\Gamma_1,\ldots,\Gamma_m\}$ is elicitable.
  In particular, if $\{\gamma,\Gamma_1,\ldots,\Gamma_m\}\in\C$, $\elic_\C(\gamma) \leq \sum_{i=1}^m k_i+1$.
\end{theorem}

Returning to RVaR, we have $\Gamma_1 = \mathrm{VaR}_\alpha$, $\lbar_1 = \ES_\alpha$, $\Gamma_2 = \mathrm{VaR}_\beta$, $\lbar_2 = \ES_\beta$, and take $\alpha_1 = \alpha/(\alpha-\beta) < 0$ and $\alpha_2 = \beta/(\beta-\alpha)>0$.
Theorem~\ref{thm:lin-comb-bayes-risk} then recovers the elicitability of $(\mathrm{VaR}_\alpha,\mathrm{VaR}_\beta,\mathrm{RVaR}_{\alpha,\beta})$ and $\elici(\mathrm{RVaR}_{\alpha,\beta}) \leq 3$.
Moreover, the scope of loss functions (\S~\ref{sec:proof-lin-comb-bayes-risks},\S~\ref{sec:elic-complex-loss-expect-shortf}) matches those found by Fissler and Ziegel~\citeyearpar{fissler2019elicitability}.
Unlike our other examples, however, it is unclear how to prove lower bounds on the complexity of RVaR or other linear combinations of Bayes risks; this is an interesting direction for future work.

\subsection{A New Risk Measure: The Variantile}
\label{sec:ex-variantile}

The $\tau$-expectile, denoted $\mu_\tau$, is a type of generalized quantile introduced by Newey and Powell~\citeyearpar{newey1987asymmetric}, is defined as the solution $x$ to the equation $\E_p\left[|\ones_{x\geq Y}-\tau|(x-Y)\right]=0$, where $\tau \in (0,1)$, which also shows $\mu_\tau\in\ID_1$.
Here we propose the $\tau$-\myemph{variantile}, an asymmetric variance-like measure analogous to the $\tau$-expectile: just as the mean is the solution $x=\mu$ to the equation $\E_p[x-Y]=0$, and the variance is $\Var(p) = \E_p[(\mu-Y)^2]$, we define the $\tau$-variantile $\Var_\tau$ by $\Var_\tau(p) = 2 \E_p\left[|\ones_{\mu_\tau\geq Y}-\tau|(\mu_\tau-Y)^2\right]$.
As the expectile can be thought of as a compromise between the mean and a quantile, the variantile can be thought of a compromise between the variance, recovered by $\tau = 0.5$, and the variance of a ``superquantile''; see \S~\ref{sec:ex-es}.
Therefore, variantiles may have applications as a new tractable measure of risk.  (During the final preparation of this paper for publication, we learned that this same concept was previously proposed in unpublished work by Wei Hu and Zhenlong Zheng as the ``variancile''.)

It is well-known that $\mu_\tau$ can be expressed as the minimizer of a \myemph{asymmetric least-squares} problem: the loss $L(x,y) = |\ones_{x\geq y}-\tau|(x-y)^2$ elicits $\mu_\tau$ \citep{newey1987asymmetric,gneiting2011making}.  Hence, as the variance is in fact a Bayes risk for the mean, so is the $\tau$-variantile for the $\tau$-expectile:
\begin{align*}
  \mu_\tau(p) = \argmin_{x\in\reals} \;2 \E_p\left[|\ones_{x\geq
      Y}-\tau|(x-Y)^2\right]
  \\
  \Var_\tau(p) = \min_{x\in\reals} \;2 \E_p\left[|\ones_{x\geq
      Y}-\tau|(x-Y)^2\right]~.
\end{align*}
We now see the pair $\{\mu_\tau,\Var_\tau\}$ is elicitable by Theorem~\ref{thm:main-theorem}, and as $\mu_\tau \in \I$ we obtain a tight complexity bound with respect to $\I$ from Proposition~\ref{prop:tighter-lower-bound}.
Moreover, if $\Y$ is bounded, we have $\{\mu_\tau,\Var_\tau\} \in \Cstrong$ from Proposition~\ref{prop:str-cvx-elic} below, which gives conditions under which the loss $L^*$ in eq.~\eqref{eq:elic-bayes-risk} can be taken to be strongly convex;
in this case, we have tight bounds for $\Cstrong$ and $\Cstrict$ as well.
See \S~\ref{sec:omitted-variantiles} for the full proof.

\begin{corollary}
  \label{cor:expectile}
  Let $\P$ contain $\gaussians$, or any set of distributions such that (i) Condition~\ref{cond:v-interior} holds for the $\tau$-expectile and some $r\in\reals$, and (ii) there are at least two distributions with $\tau$-expectile $r$ but different $\tau$-variantiles.
  Then $\elici(\Var_\tau)=2$.
  If additionally $\Y\subseteq\reals$ is bounded, thereby excluding $\gaussians$, then $\elic_\C(\Var_\tau) = 2$ for all $\C$ satisfying $\Cstrong \subseteq \C \subseteq \I$.
\end{corollary}

More generally, Herrmann et al.~\citeyearpar{herrmann2018multivariate} introduce a multivariate expectile.
Observing that univariate asymmetric least-squares can be written $L(x,y) = \tfrac 1 2 |y-x|(|y-x|+(2\tau-1)(y-x))$, they generalize this loss to higher dimensions by replacing $|\cdot|$ with $\|\cdot\|_2$ and letting $2\tau-1$ now be an arbitrary vector in the open unit ball, just as $-1 < 2\tau-1 < 1$.
The minimizer of this loss is the multivariate expectile, $\mu^{(k)}_\tau(p)$, where $k$ is the dimension of the vector space.
We can analogously define our multivariate variantile; the pair are given as follows,
\begin{align}
  \mu^{(k)}_\tau(p) = \argmin_{x\in\reals^k} \;2\E_p\left[\|Y-x\|_2(\|Y-x\|_2+\inprod{\tau,Y-x})\right] \label{eq:mult-expectile}
  \\
  \Var^{(k)}_\tau(p) = \min_{x\in\reals^k} \;2\E_p\left[\|Y-x\|_2(\|Y-x\|_2+\inprod{\tau,Y-x})\right]~, \label{eq:mult-variantile}
\end{align}
where now $Y\in\reals^k$, and $\tau\in\reals^k$ is a vector in the open unit ball, i.e.,\ $\|\tau\|_2<1$.
We again obtain a tight complexity bound, which as in the univariate case holds with respect to $\I$ unconditionally, and with respect to $\Cstrict$ and $\Cstrong$ when $\Y$ is bounded.

\begin{corollary}
  \label{cor:multivariate-expectile}
  Let $\P$ contain $\gaussians$, or any set of distributions such that (i) Condition~\ref{cond:v-interior} holds for $\mu^{(k)}_\tau$ and some $r\in\reals^k$, and (ii) there are at least two distributions $p,p'\in\P$ with $\mu^{(k)}_\tau(p)=\mu^{(k)}_\tau(p')=r$ but $\Var^{(k)}_\tau(p)\neq\Var^{(k)}_\tau(p')$.
  Then $\elici(\Var^{(k)}_\tau) = k+1$.
  If additionally $\Y\subseteq\reals$ is bounded, thereby excluding $\gaussians$, then $\elic_\C(\Var^{(k)}_\tau) = k+1$ for all $\C$ with $\Cstrong \subseteq \C \subseteq \I$.
\end{corollary}

\subsection{Other Risk Measures}
\label{sec:ex-rockafellar}
\label{sec:ex-coherent}

\newcommand{\cR}{\mathcal{R}}
\newcommand{\cD}{\mathcal{D}}
\newcommand{\cE}{\mathcal{E}}
\newcommand{\cS}{\mathcal{S}}

Several other risk measures have appeared in the literature in finance and engineering.
For example, consider the broad class risk measures arising from the ``risk quadrangles'' of Rockafellar and Uryasev~\citeyearpar{rockafellar2013fundamental}, which are given by the following relationships between a risk $\cR$, deviation $\cD$, error $\cE$, and a statistic $\cS$, all functions from random variables to the reals:
\begin{align*}
  \cR(X) = \min_{c\in\reals}\{c + \cE(X-c)\}~,
           \quad
  \cD(X) = \min_{c\in\reals}\{\cE(X-c)\}~,
           \quad
  \cS(X) = \argmin_{c\in\reals}\{\cE(X-c)\}~.
\end{align*}
Fixing a particular form for $\cE$ then fixes the other three.
Our results apply readily to the \myemph{expectation quadrangle} case, where $\cE(X) = \E[e(X)]$ for some $e:\reals\to\reals$.
Here we consider $\cR$ and $\cD$ as functions of the distribution of $X$, which is possible here as they are both law-invariant when $\cE$ is of expectation type; see \S~\ref{sec:elic-complex-setting}.
Under appropriate conditions, Proposition~\ref{prop:tighter-lower-bound} then implies $\elici(\cR) = \elici(\cD) = 2$ provided $\cS$ is non-constant and identifiable.
This statement covers several of their examples, such as the truncated mean, log-exp, and rate-based.
Beyond the expectation case, the authors show a Mixing Theorem, where they consider
\begin{align*}
  \cD(X) = \min_{c\in\reals} \: \min_{b_1,..,b_k \in \reals} 
           \left\{ \sum_{i=1}^k \lambda_i \cE_i(X-c-b_i) \,\big|\, \sum_i\lambda_iB_i = 0\right\}
         = \min_{b_1',..,b_k' \in \reals}
  \left\{ \sum_{i=1}^k \lambda_i \cE_i(X-b_i') \right\}~.
\end{align*}
Once again, if the $\cE_i$ are all of expectation type and the $\cS_i$ identifiable, Theorem~\ref{thm:main-theorem} gives $\elici(\cD)=\elici(\cR)\leq k+1$, with a matching lower bound from Proposition~\ref{prop:tighter-lower-bound}, under appropriate assumptions, provided the $\cS_i$ are all independent (Definition~\ref{def:prop-indep}).
Finally, the Reverting Theorem for a pair $\cE_1,\cE_2$ can be seen as a special case of the above where one replaces $\cE_2(X)$ by $\cE_2(-X)$.
Consequently, our results give tight complexity bounds for several other examples, including ``superquantiles'' or spectral risk measures, the quantile-radius quadrangle, and optimized certainty equivalents of Ben-Tal and Teboulle~\citeyearpar{ben-tal2007old}.

Our results explain the existence of regression procedures for some of these risk/deviation measures.
For example, Rockafellar et al.~\citeyearpar{rockafellar2014superquantile} introduce \myemph{superquantile regression} to fit models to spectral risk measures.
Superexpectations are another example \citep{rockafellar2013superquantiles}.
In light of Theorem~\ref{thm:main-theorem}, one could interpret superquantile regression as simply performing regression on the $k$ different quantiles in tandem with their joint Bayes risk.
In fact, our results show that any risk/deviation generated by mixing several expectation quadrangles will have a similar procedure, in which the $b_i'$ variables are simply computed along side the measure of interest.
Even more broadly, such regression procedures exist for \myemph{any} Bayes risk.

Finally, we briefly consider \myemph{coherent} risk measures, a class containing spectral risk measures and several other examples above.
Among other properties, coherent risk measures satisfy positive homogeneity, in the sense that $\rho(\alpha X) = \alpha\rho(X)$ where $\alpha \geq 0$.
Coherent risk measures can be characterized by their well-known dual representation,
$\rho(X) = \sup_{Q\in\Qc} \E[Q X]$,
where $\Qc$ is a convex set of random variables called the \myemph{risk envelope} \citep{follmer2004stochastic,ang2018dual}.
Despite the similarity of this representation to eq.~\eqref{eq:prop-minimum}, Theorem~\ref{thm:main-theorem} typically does not apply directly, as often the envelope $\Qc$ is an infinite-dimensional set, yielding trivial upper bounds.
For example, expected shortfall at level $\alpha$ is usually given with $\Qc = \{ Q : 0 \leq Q \leq 1/\alpha \}$ \citep{delbaen2002coherent,ang2018dual}.
That said, if the potential optimizers within $\Qc$ can be parameterized by a finite-dimensional parameter, as we saw for expected shortfall in eq.~\eqref{eq:elic-complex-es-3}, and sufficient continuity holds with respect to that parameter, the theorem would apply.

\subsection{Empirical Risk Minimization}
\label{sec:erm}

Recall that in many statistical learning settings, one wishes to learn a model or \myemph{hypothesis} $h:\X\to\R$ from a class $\H$ to predict a value in $\R$ as a function of a feature vector $x\in\X$.
For example, linear classification has $\X=\reals^d$ and $\Y=\R=\{+1,-1\}$, with hypothesis class $\H_{\text{lin}} = \{h_\theta: x \mapsto \sgn(x\cdot\theta + b) \mid \theta\in\reals^m, b\in\reals\}$.
The prediction error of a hypothesis $h$ is judged by some given loss $\ell : \R \times \Y \to \reals$, such as the 0-1 loss $\ell(r,y) = \ones\{r\neq y\}$ in classification.
Letting $P$ be the underlying distribution over $\X\times\Y$, one therefore seeks a hypothesis $h\in\H$ which minimizes the expected loss $\E_P \ell(h(X),Y)$.

Many algorithms to solve this learning problem fall under the broad umbrella of (regularized) empirical risk minimization, where given a finite data set $D = \{(x_i,y_i)\}_{i=1}^n$, one chooses
\begin{equation}
  \label{eq:ERM}
  h^* \in \argmin_{h\in\H} \sum_{(x_i,y_i)\in D} \ell(h(x_i),y_i) + g(h)~,
\end{equation}
where $g:\H\to\reals$ is a regularizer.
The optimization problem in eq.~\eqref{eq:ERM} can be intractable, however, especially when $\R$ is a finite set, as in classification, ranking, and related problems \citep{arora1993hardness}.
A common approach therefore is instead to find a \myemph{surrogate} loss $L:\reals^k\times\Y\to\reals$ which is easier to optimize, and to choose the hypothesis which minimizes the empirical $L$ loss, followed by a \myemph{link function} $f:\reals^k\to\R$ \citep{bartlett2006convexity}.
For example, support vector machines (SVMs), boosting, and logistic regression can all be seen as optimizing convex surrogate losses over $\reals$, followed by the link $f=\sgn:\reals\to\{+1,-1\}$.
See below for more on SVMs.

This surrogate procedure raises the following question: when does optimizing the surrogate loss $L$ and applying some link $f$ achieve the optimal $\ell$ loss, or in other words, when is $L$ \myemph{calibrated}?
There are at least three interesting ways to make this question precise.
The weakest is that exactly minimizing $L$ and then applying $f$ exactly minimizes $\ell$, for all distributions over the outcomes $\Y$.
Stronger, we can require asymptotic calibration, that any sequence that converges to the minimum of $L$, when composed with $f$, also converges to the minimum of $\ell$.
Stronger still, we can seek rates at which this convergence occurs.

All of these formulations have connections to elicitation complexity.
Let $\Gamma^L$ and $\Gamma^\ell$ be the possibly set-valued properties elicited by $L$ and $\ell$, respectively.
The weakest relationship above, that exactly minimizing $L$ and applying $f$ exactly minimizes $\ell$, holds if and only if $\Gamma^L$ refines $\Gamma^\ell$, in the sense that for all $u\in\reals^k$ there exists an $r\in\R$ such that $\Gamma^L_u \subseteq \Gamma^\ell_r$; see Definition~\ref{def:refine}.
For example, if one seeks a smooth strictly convex loss $L:\reals^k\times\Y\to\reals$ which is calibrated in this weak sense with respect to $\ell$, then the minimum possible value of the dimension $k$ is precisely the elicitation complexity $\elic_\Cstrict(\Gamma^\ell)$.

For asymptotic calibration, there is an additional requirement that $f$ and $\ell$ satisfy some type of continuity.
Intuitively, if $\ell \circ f \circ \Gamma^L$ is not continuous, one may be able to minimize $L$ arbitrarily well but still be far from minimizing $\ell$.
As a simple example for $\R=\Y=\reals$ and $k=1$, consider $\ell(r,y) = \ones\{r\neq y\}$ and $L(u,y) = (u-y)^2$.
Agarwal and Agarwal~\citeyearpar{agarwal2015consistent} give such a condition for classification-like problems.
The general version corresponds to the existence of a strictly positive calibration function \citep{steinwart2008support}.
Rates typically rely on a stronger uniform continuity property, e.g., Theorem 3.22 of Steinwart and Christmann~\citeyearpar{steinwart2008support}.

As a concrete example, consider the hinge loss $L(u,y) = \max\{0,1-uy\}$ where $\Y=\{+1,-1\}$ and $u\in\reals$.
As discussed above, SVMs use hinge loss as a convex surrogate for 0-1 loss $\ell(r,y) = \ones\{r\neq y\}$, where the surrogate minimization is followed by the link $f(u) = \sgn(u)$.
Let us verify that the various relationships hold between the minimizers of these losses.
After clipping $u$ to the range $[-1,1]$, as all other values of $u$ are weakly dominated, we can describe the property $\Gamma^L$ elicited by the hinge loss, and its level sets $\Gamma^L_u$, as follows:
\begin{align}
  \label{eq:hinge}
  \Gamma^L(p) &=
  \begin{cases}
    -1 & 0 \leq p(+1) < 1/2\\
    [-1,1] & p(+1) = 1/2\\
    1 & 1/2 < p(+1) \leq 1
  \end{cases}~,
  \quad\;\;
  \Gamma^L_u =
  \begin{cases}
    \{p : p(-1) \geq 1/2\} & u = -1\\
    \{p : p(+1) \geq 1/2\} & u = 1\\
    \{(1/2,1/2)\} & u \in (-1,1)
  \end{cases}~.
\end{align}
By inspection, we have $\Gamma^L_u \subseteq \Gamma^\ell_r$ for $r =\sgn(u)$, implying the link function $f = \sgn$.
Moreover, \citet[Theorem 3.34, 3.36]{steinwart2008support} show that hinge loss achieves asymptotic, and indeed uniform, calibration.

These observations show that, fundamentally, the surrogates $L$ for $\ell$ which lead to consistent learning algorithms depend on $\Gamma^L$, $f$, and $\Gamma^\ell$, rather than $L$ directly.
Implicit in this claim, however, is the assumption that the learning algorithm is considering an unrestricted class $\H$ of models.
If the model class is restricted, such as for $\H_{\text{lin}}$ above, we are not guaranteed that the optimal map $h^*: x \mapsto \Gamma^L(p_x)$, where $p_x=\Pr[Y=y\mid X=x]$ is the true distribution over $y$ values, will be in $\H$.
In this case, consistency is much harder to establish, and in particular, different choices of surrogates which elicit $\Gamma^L$ will affect the final $\ell$-risk achieved.
Therefore, tools which provide a variety of loss functions can also be important.

In other learning settings, the natural problem is not necessarily to minimize a particular loss $\ell$, but instead to estimate a given statistic.
For example, in regression, for a given $x\in\reals^d$ there will typically be a distribution over $y\in\reals$ values in the population, and we are given some summary statistic of interest, such as the mean.
In these settings, it is natural to specify the problem directly in terms of the desired property $\Gamma$ and seek an elicitable $\Gamma^L$ and link $f$ such that $\Gamma = f \circ \Gamma^L$.
As long as $\Gamma$ satisfies suitable continuity properties, learning guarantees similar to consistency can be provided.

In summary, therefore, upper bounds on $\elic_\C(\Gamma)$ often give statistically consistent surrogate losses for a given property of interest $\Gamma$, where $\Gamma = \Gamma^\ell$ if a loss $\ell$ is given instead.
Moreover, an upper bound $\elic_\C(\Gamma) \leq k$ implies that the intermediate property $\Gamma^L$ is a function to $\reals^k$, meaning the dimension of the range of the underlying hypothesis can be taken to be at most $k$.
Note that $k$ is not the number of parameters, which for $\H_{\text{lin}}$ was $d+1$.
Similarly, lower bounds $\elic_\C(\Gamma) \geq k$ show that for any such surrogate loss and link to exist, with respect to the class $\C$, then the dimension of the hypothesis range must be at least $k$.

\section{Basic Complexity Results}
\label{sec:basic-results}
\label{sec:ident-elic}

\subsection{Initial Observations}

We begin with an important point: without any restriction on the class of properties $\C$, Definition~\ref{def:elic-complex} becomes trivial and all properties become 1-elicitable.
This observation does not subsume Remark~\ref{remark:mean-squared} about the case $\Gamma(p) = (\E_p[Y])^2$, as there we can show $\elic_\Clin(\Gamma)=1$.

\begin{remark}
  \label{remark:everything-1-elic}
  The set-theoretic cardinalities of $\reals$ and $\reals^\N$ are the same, as are those of $\N$ and $\Q$, and hence there is a bijection $\varphi: \reals \to \reals^\Q$ \citep[Theorem 2.3]{hrbacek1999introduction}.
  Taking $\Y=\reals$,
any probability measure defined on the Borel $\sigma$-algebra is uniquely determined by its cumulative distribution function (CDF) $F$ which is in turn uniquely determined by its values on the rationals $\{F(q)|q\in\Q\}$.
  Let $g:\P\to\reals^\Q$ be the map which converts probability measure $p$ to its CDF and evaluates it on the rationals.
  Then $h \defeq \varphi^{-1} \circ g$ is an injective map between $\P$ and $\reals$.
  Thus, given some property $\Gamma:\P\to\reals^k$, we let $\hat\Gamma = h$ encode each distribution into a single real number.
  We elicit $\Gamma$ with $L(\hat r,y) = L^*(h^{-1}(\hat r),y)$ for some proper loss function $L^*:\P\times\Y\to\reals$ which elicits entire distributions \citep{gneiting2007strictly}, and finally take $f = \Gamma \circ h^{-1}$ so that $f \circ \hat\Gamma = \Gamma \circ h^{-1} \circ h = \Gamma$.
  We conclude that if $\C=\EL(\P)$ is the set of all elicitable properties, then $\elic_\C(\Gamma)=1$ for all properties $\Gamma$.
\end{remark}

Behind essentially all of our nontrivial lower bounds is the concept of identification complexity.
\begin{definition}
  \label{def:iden-complex}
  A property $\Gamma$ is \myemph{$k$-identifiable},
  $k\in\N\cup\{\countinf\}$,
  if there exists $\hat \Gamma \in \ID_k(\P)$ and $f$ such that $\Gamma = f \circ \hat \Gamma$.
  The \myemph{identification complexity} of $\Gamma$ is $\iden(\Gamma) = \min\{k: \Gamma \text{ is $k$-identifiable}\}$.
\end{definition}
From our definitions, $\elici(\Gamma)\geq\iden(\Gamma)$ when both are defined, since the property $\hat \Gamma$ which in Definition~\ref{def:elic-complex} must be identifiable for $\C=\I$. 
In particular, Condition~\ref{cond:v-interior} already implies an identification complexity lower bound, which in turn lower bounds elicitation complexity.
\begin{lemma}
  \label{lem:main-iden-lower-bound}
  Let $\Gamma\in\I_k(\P)$ satisfy Condition~\ref{cond:v-interior} for some $r\in\Gamma(\P)$.
  Then $\iden(\Gamma)\geq k$.
\end{lemma}
To illustrate Definition~\ref{def:iden-complex}, recall the variance example, where $\Gamma = \Var$, $\hat\Gamma : p \mapsto (\E_p[Y],\E_p[Y^2]) \in \reals^2$, and $f:(r_1,r_2) \mapsto r_2 - r_1^2$.
Here $\iden(\Var) \leq 2$, via $V(r,y) = (y-r_1,y^2-r_2)$.
Of course, as $\hat\Gamma \in \Clin$, we also have the stronger statement $\elic_\Clin(\Var) \leq 2$.  
By Lemma~\ref{lem:main-iden-lower-bound}, we also have $\iden(\hat\Gamma) = 2$ for suitably rich $\P$.
As we now show, this can be used to provide a lower bound that $\iden(\Var) = 2$ as well.

\subsection{Redundancy and Refinement}
\label{sec:redund-refin}

It is easy to create redundant properties in various ways.
For example, given elicitable properties $\Gamma_1$ and $\Gamma_2$ the property $\Gamma \defeq \{\Gamma_1,\Gamma_2,\Gamma_1+\Gamma_2\}$ clearly contains redundant information.
We will use curly braces to combine properties when the order is irrelevant.
A concrete case is $\Gamma=\{$mean squared, variance, 2nd moment$\}$, which, as we have seen, has $\elici(\Gamma)\leq 2$.
Adding properties to such a list cannot lower its overall complexity, however, and cannot increase it beyond the sum of the individual complexities either; i.e., elicitation complexity is sub-additive.

\begin{lemma}
  \label{lem:complex-sub-add}
  For all properties $\Gamma_1,\ldots,\Gamma_m$, and classes $\C$, we have \[\max_{1 \leq i \leq m} \elic_\C(\Gamma_i) \leq \elic_\C(\{\Gamma_1,\ldots,\Gamma_m\}) \leq \sum_{i=1}^m \elic_\C(\Gamma_i)~.\]
\end{lemma}
\begin{proof}
  For the first inequality, letting $k = \elic_\C(\{\Gamma_1,\ldots,\Gamma_m\}) \in \N\cup\{\infty\}$, we have an elicitable $\hat\Gamma \in \C$, $\hat\Gamma:\P\to\reals^k$, and $f$ such that $(\Gamma_1,\ldots,\Gamma_m) = f \circ \hat\Gamma$.
  Letting $g$ be the projection which picks out the $i$th coordinate, we have $\Gamma_i = (g\circ f) \circ \hat\Gamma$, thus establishing $\elic(\Gamma_i) \leq k$.
  For the second, for any elicitable $\hat\Gamma_i \in \C$ and $f_i$ with $\Gamma_i = f_i \circ \hat\Gamma_i$, we of course can take $\hat\Gamma = (\hat\Gamma_1,\ldots,\hat\Gamma_m)$ and $f = (f_1,\ldots,f_m)$ so that $(\Gamma_1,\ldots,\Gamma_m) = f \circ \hat\Gamma$.
\end{proof}

The following definitions and lemma capture various aspects of a lack of redundancy, which together ensure that the second inequality of Lemma~\ref{lem:complex-sub-add} will be tight.

\begin{definition}
  \label{def:full-rank}
  Property $\Gamma:\P\to\reals^k$ in $\ID(\P)$ is \myemph{balanced} if $\iden(\Gamma)=k$.
\end{definition}

There are two ways for a property to fail to be balanced.
First, as the examples above suggest, $\Gamma$ can be ``redundant'' so that it is a link of a lower-dimensional identifiable property.
Balance can also be violated if \myemph{more} dimensions are needed to identify the property than to specify it.
This is the case with most of the properties in \S~\ref{sec:elic-complex-exampl-appl}, e.g., the variance which is a 1-dimensional property but which we will show has $\iden(\Var)=2$.

\begin{definition}
  \label{def:prop-indep}
  Properties $\Gamma,\Gamma'\in\ID(\P)$ are \myemph{independent} if
  $\iden(\{\Gamma,\Gamma'\}) = \iden(\Gamma)+\iden(\Gamma')$.
\end{definition}

\begin{lemma}
  \label{lem:elic-complex-independent}
  If $\Gamma,\Gamma'\in\EL(\P)\cap\ID(\P)$ are independent and balanced, then we have $\elici(\{\Gamma,\Gamma'\}) = \elici(\Gamma) + \elici(\Gamma')$.
\end{lemma}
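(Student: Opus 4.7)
The plan is to establish the equality via matching upper and lower bounds, after first using full rank to pin down each individual elicitation complexity. Write $\Gamma : \P \to \reals^k$ and $\Gamma' : \P \to \reals^{k'}$. The starting observation is that full rank plus elicitability forces $\elici(\Gamma) = k$ and $\iden(\Gamma) = k$: by hypothesis $\Gamma \in \EL_k(\P)$, and full rank means $\iden(\Gamma) = k$, so $\Gamma$ itself (taking $\hat\Gamma = \Gamma$ and $f = \id$) witnesses $\Gamma \in \EL_k(\P) \cap \ID_k(\P)$, giving $\elici(\Gamma) \leq k$; combined with the general bound $\elici \geq \iden$ stated in the preceding discussion, we get $\elici(\Gamma) = k$. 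The same holds for $\Gamma'$ with dimension $k'$.

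For the upper bound, I would exhibit $\{\Gamma, \Gamma'\}$ as a member of $\EL_{k+k'}(\P) \cap \ID_{k+k'}(\P)$ by the standard separable construction. If $L$ elicits $\Gamma$ and $L'$ elicits $\Gamma'$, then
$$\tilde L\bigl((r, r'), \omega\bigr) \defeq L(r, \omega) + L'(r', \omega)$$
decouples in expectation, so $\tilde L((r, r'), p) = L(r, p) + L'(r', p)$ is jointly minimized exactly at $(r, r') = (\Gamma(p), \Gamma'(p))$. Stacking the identification functions $(V, V')$ identifies the pair. With the identity link this witnesses $\elici(\{\Gamma, \Gamma'\}) \leq k + k' = \elici(\Gamma) + \elici(\Gamma')$.

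For the lower bound I would chain the three facts already in hand:
$$\elici\bigl(\{\Gamma, \Gamma'\}\bigr) \;\geq\; \iden\bigl(\{\Gamma, \Gamma'\}\bigr) \;=\; \iden(\Gamma) + \iden(\Gamma') \;=\; \elici(\Gamma) + \elici(\Gamma'),$$
where the first inequality is the general $\elici \geq \iden$ bound, the middle equality is exactly the independence hypothesis (Definition~\ref{def:elic-complex-2}), and the final equality is the preliminary observation drawn from the full-rank hypothesis applied to each of $\Gamma, \Gamma'$.

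There is no single hard step here — the lemma is essentially a definitional repackaging. The only substantive technical content is the additive combination of losses and identification functions, which is a one-line verification. The conceptual point worth highlighting in the writeup is \emph{why both hypotheses are needed}: independence alone controls the identification complexity of the pair, and full rank is precisely what allows us to translate that identification bound back into an elicitation bound on each side. Dropping either hypothesis breaks the chain in the lower bound.
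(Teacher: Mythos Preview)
Your proof is correct and follows essentially the same route as the paper: the lower bound via the chain $\elici \geq \iden(\{\Gamma,\Gamma'\}) = \iden(\Gamma)+\iden(\Gamma') = k+k'$, and the upper bound via the separable loss $L(r,\omega)+L'(r',\omega)$. You are slightly more explicit than the paper in spelling out why $\elici(\Gamma)=k$ and in noting that the stacked identification function certifies $\{\Gamma,\Gamma'\}\in\ID_{k+k'}(\P)$ (which the paper's terse proof leaves implicit but is indeed needed for the $\elici$ upper bound).
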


\begin{proof}
  Let $\Gamma:\P\to\reals^k$ and $\Gamma':\P\to\reals^{k'}$.
  As $\Gamma,\Gamma'\in\EL(\P)\cap\ID(\P)$, we have $\elici(\Gamma) \leq k$ and $\elici(\Gamma') \leq k'$.
  Unfolding our definitions, we have $\elici(\{\Gamma,\Gamma'\}) \geq \iden(\{\Gamma,\Gamma'\}) = \iden(\Gamma) + \iden(\Gamma') = k + k' \geq \elici(\Gamma) + \elici(\Gamma')$.  For the upper bound, we simply take losses $L$ and $L'$ for $\Gamma$ and $\Gamma'$, respectively, and elicit $\{\Gamma,\Gamma'\}$ via $\hat L(r,r',y) = L(r,y) + L'(r',y)$.
\end{proof}

To illustrate the lemma, $\elici(\Var)=2$, yet $\Gamma=\{\E[Y],\Var\}$ has $\elici(\Gamma)=2$, so clearly the mean and variance are not both independent and balanced.
As we have remarked, variance is not balanced.
However, the mean and second moment satisfy both by Lemma~\ref{lem:elic-complex-means}.

Similar to redundancy, we can think of one property \myemph{refining} another, in the sense of encoding strictly more information.
\begin{definition}
  \label{def:refine}
  $\Gamma'$ \myemph{refines} $\Gamma$ if there exists a function $f$ such that $\Gamma = f \circ \Gamma'$.
\end{definition}
Equivalently, $\Gamma'$ refines $\Gamma$ if each level set of $\Gamma'$ is contained in a level set of $\Gamma$.
Immediately, a property which refines another cannot have lower elicitation complexity.

\begin{lemma}
  \label{lem:refine}
  If $\Gamma'$ refines $\Gamma$ then  $ \elic_\C(\{\Gamma,\Gamma'\}) = \elic_\C(\Gamma') \geq \elic_\C(\Gamma)$.
\end{lemma}

\begin{proof}
For the inequality, if $\Gamma'$ is $k$-elicitable with respect to $\C$, then there exists an elicitable $\hat \Gamma \in \C$ such that $\Gamma' = g \circ \hat \Gamma$.  But then $\Gamma = f \circ g \circ \hat \Gamma$, so $\Gamma$ is also $k$-elicitable with respect to $\C$.
For the equality, $ \elic_\C(\{\Gamma,\Gamma'\}) \geq \elic_\C(\Gamma')$ follows by Lemma~\ref{lem:complex-sub-add}.  To see that we also have  $\elic_\C(\{\Gamma,\Gamma'\}) \leq \elic_\C(\Gamma')$, observe that $\{\Gamma,\Gamma'\} = \{f \circ g,g\} \circ \hat \Gamma$.
\end{proof}

With this observation about refinement, we can finally conclude that $\iden(\Var) = 2$, because the pair of the mean and second moment refines the variance.  In fact the reverse is true as well because the mapping is a bijection.  In this sense our lower bounds care only about the geometry of the level sets of $\Gamma$, not on how those are labeled.

\subsection{Upper Bounds}
\label{sec:basic-upper-bounds}

We now provide some straightforward upper bounds which hold for every property.
Clearly, whenever $p \in \P$ can be uniquely determined by some number of elicitable parameters then the elicitation complexity of every property is at most that number: one can simply elicit the entire distribution and then the link function simply computes the desired property.
The following propositions give two notable applications of this observation.
We adopt the convention that $F$ denotes a cumulative distribution function (CDF).
Recall that we denote a countably infinite elicitation complexity by $\countinf$.

\begin{proposition}
  \label{prop:everything-elic-n-1}
  When $|\Y| \in \N$, every property $\Gamma$ has $\elic_\C(\Gamma)\leq |\Y|-1$ for all $\C \supseteq \Clin$.
\end{proposition}
\begin{proof}
  Letting $\Y = \{y_1,\ldots,y_n\}$, a distribution $p$ is uniquely determined by its first $n-1$ components $p(y_1),\ldots,p(y_{n-1})$, each of which are elicitable as linear properties $p(y) = \E_p\ones_{Y=y}$.
\end{proof}

\begin{proposition}
  \label{prop:elic-complex-countable-cdf}
  When $\Y=\reals$, every property $\Gamma$ has $\elic_\C(\Gamma)\leq\countinf$ for all $\C \supseteq \Clin$.
\end{proposition}

\begin{proof}
  Since a distribution is determined by the values of its CDF $F$ on a dense set,  let $\{q_i\}_{i\in\N}$ be an enumeration of the rational numbers, and define $\hat\Gamma(F)_i = 2^{-i} F(q_i)$.
  As $\hat\Gamma$ is square-summable, we have $\hat\Gamma \in \Clin$, cf.\ discussion with the proof of Proposition~\ref{prop:nested-convex-classes-long}, and elicited by $L(\{r_i\}_{i\in\N},y) = \sum_{i\in\N} (r_i-2^{-i} \ones_{y\leq q_i})^2$.
  With an appropriate link we can compute $\Gamma$.
\end{proof}

The restrictions above on $\Y$ may easily be placed on $\P$ instead.
For example, finite $\Y$ is equivalent to $\P$ having support on a finite subset of $\Y=\reals$.

In particular, Proposition~\ref{prop:everything-elic-n-1} and~\ref{prop:elic-complex-countable-cdf} apply to the identity property $\idprop(p) = p$,
and as we now show with a turn to lower bounds, under mild conditions the bounds they give are tight for it.

\subsection{Lower bounds for specific properties: expectations and quantiles}
\label{sec:basic-means-quantiles}

A well-studied class of properties is the set of expectations of some vector-valued random variable, often called the \myemph{linear} case.
All such properties are elicitable and identifiable \citep{savage1971elicitation,abernethy2012characterization,frongillo2015vector}, with complexity bounded by the dimension of the random variable, but of course the complexity can be lower if the range of $\Gamma$ is not full-dimensional.
In what follows, let $\affdim$ denote the dimension of the affine hull.

\begin{lemma}
  \label{lem:elic-complex-means}
  Let $\phi:\Y\to\reals^k$ be $\P$-integrable, $k\in\N$, and let $\Gamma(p)=\E_p[\phi(Y)]$.
  Then $\elic_\C(\Gamma)=\affdim\,\Gamma(\P)$, the dimension of the affine hull of the range of $\Gamma$, for any $\C$ satisfying $\Clin \subseteq \C \subseteq \I$.
\end{lemma}

Quantiles are another important case:
for sufficiently rich sets of distributions, distinct quantiles are independent and balanced, so their elicitation complexity is the number of quantiles being elicited.
Here we take $\C=\I$ as losses eliciting quantiles cannot be strictly convex; see \S~\ref{sec:ex-es}.
As with expectations, if the set of distributions is not sufficiently rich the elicitation complexity can be lowered.
We state two versions of the condition that $\P$ be ``rich''.
These conditions are satisfied by, for example, the set of all mixtures of univariate Gaussian distributions.

\begin{condition}
  \label{cond:quantiles}
  \label{cond:whole-dist}
  Let $k\in\N$ be given.
  For all $x\in[0,1]^k$, there exist $r_1,\ldots,r_k\in\reals$ such that $x\in\interior\{(F(r_1),\ldots,F(r_k))^\tr  : F \in \P \} \subseteq \reals^k$.
\end{condition}

\begin{condition}
  \label{cond:linear}
  Let $k\in\N$ be given.
  There exists a $\P$-integrable function $\phi:\Y\to\reals^k$ with $\affdim\,\{\E_p[\phi(Y)] : p\in\P\} = k$.
\end{condition}

Both of these conditions can be throught of as special cases of applying Condition~\ref{cond:v-interior} for various choices of $r$ to the identification function for the $\alpha$-quantile $V(r,y) = \ones_{y\leq r} - \alpha$, or $V(r,p) = F(r) - \alpha$.
Again, Condition~\ref{cond:linear} is implied by Assumption V1 of Fissler and Ziegel~\citeyearpar{fissler2016higher}.
Condition~\ref{cond:quantiles} implies Condition~\ref{cond:linear}, by considering $\phi(y)_i = \ones_{y\leq r_i}$.

\begin{lemma}
  \label{lem:elic-complex-quantiles}
  For $\Y=\reals$ let $\P\subseteq\Pquant$, defined in \S~\ref{sec:ex-es},
  satisfy Condition~\ref{cond:quantiles} for some $k\in\N$.
  For all distinct $\alpha_1,\ldots,\alpha_k \in (0,1)$, we have $\elici(\{q_{\alpha_1},\ldots,q_{\alpha_k}\})=k$,
  where $q_{\alpha}$ is the $\alpha$-quantile function.
\end{lemma}

The quantile example in particular allows us to see that all complexity classes, including $\countinf$, are occupied.
In fact, from the examples in \S~\ref{sec:ex-cvx-ftns-of-means}, we can see that even for \myemph{real-valued} properties $\Gamma:\P\to\reals$, all classes are occupied.
Recall that Condition~\ref{cond:quantiles} implies Condition~\ref{cond:linear}.

\begin{proposition}
  \label{prop:elic-complex-classes}
  Let $\P$ satisfy Condition~\ref{cond:linear}, or Condition~\ref{cond:quantiles}, for all $k\in\N$.
  Then for all $k\in\N\cup\{\infty\}$ there exists a property $\gamma_k:\P\to\reals$ with
  $\elic_\C(\gamma_k)=k$ for any $\C$ satisfying $\Clin \subseteq \C \subseteq \I$.
\end{proposition}
\begin{proof}
  Letting $\phi:\Y\to\reals^k$ be the random variable from Condition~\ref{cond:linear}, we may take $\gamma_k(p) = \|\E_p[\phi(Y)]\|^2$ by 
  Corollary~\ref{cor:func-mean}.
  The case $k=\infty$ follows from Corollary~\ref{cor:entropy-norm}.
\end{proof}

We now give a matching lower bound to Propositions~\ref{prop:everything-elic-n-1} and~\ref{prop:elic-complex-countable-cdf}, stating that the complexity of eliciting the whole distribution via identifiable properties is maximal when $\P$ is sufficiently rich.
This observation constrasts with Remark~\ref{remark:everything-1-elic}, where we saw that $\elic_\C(\Gamma)=1$ when $\C$ is too large.

\begin{lemma}
  \label{lem:complex-whole-dist}
  Let $\idprop:\P\to\P$, $\idprop:p\mapsto p$.
  The following hold for all $\Clin \subseteq \C \subseteq \I$.
  If $\Y$ is finite, then $\elic_\C(\idprop)=\affdim\, \P$; in particular, if $\P$ is the probability simplex, then $\elic_\C(\idprop)=|\Y|-1$.
  If $\Y=\reals$ and there are infintely many $k \in \N$ satisfying Condition~\ref{cond:whole-dist} or~\ref{cond:linear}, then $\elic_\C(\idprop)=\countinf$.
\end{lemma}

\begin{proof}
  For $|\Y|<\infty$, observe that $\idprop$ is linear and apply Lemma~\ref{lem:elic-complex-means}.
  For $\Y=\reals$, given finite $k$ such that $\P$ satisfies Condition~\ref{cond:quantiles} or~\ref{cond:linear}, define $\gamma_k$ as in the proof of Proposition~\ref{prop:elic-complex-classes};
  as $\idprop$ refines all properties, Lemma~\ref{lem:refine} gives $\elic_C(\idprop) \geq \elic_\C(\gamma_k) = k$.
  We now have $\elic_C(\idprop) \geq k$ for infinitely many $k\in\N$, and $\elic_C(\idprop) \leq \countinf$ from Proposition~\ref{prop:elic-complex-countable-cdf}.
\end{proof}

\section{Eliciting the Bayes Risk}
\label{sec:elic-complex-bayes-risk}

\subsection{Upper Bound}
\label{sec:bayes-risk-upper-bound}

For the upper bound, we construct losses explicitly for properties that can be expressed as the pointwise minimum of an indexed set of random variables $\{X_a\}_{a\in\A}$,
\begin{equation}
  \label{eq:prop-minimum}
  \gamma:\P\to\reals,\quad \gamma(p) = \min_{a\in\A} \E_p[X_a]~.
\end{equation}
An important special case, of course, are Bayes risks.
Recall that the Bayes risk of a loss function $L:\A\times\Y \to \reals$ is defined as $\lbar(p) := \inf_{a \in \A} L(a,p)$.
Interestingly, our construction does not elicit the minimum directly, but as a joint elicitation of the minimum value and the index that realizes this value.
The loss function takes the form of a loss eliciting the linear property $p\mapsto \E_p[X_a]$, except that here the index $a$ is not fixed, but elicited as well.

\begin{theorem}
  \label{thm:elic-minimum}
  Let $\{X_a\}_{a\in\A}$ be a set of $\P$-integrable random variables indexed by $\A\subseteq\reals^k$, $k\in\N\cup\{\infty\}$.  If $\inf_a \E_p[X_a]$ is attained for all $p\in\P$, then the loss function
  \begin{equation}
    \label{eq:loss-elic-minumum}
    L((r,a),y) = H(r) + h(r)(X_a(y)-r)
  \end{equation}
  elicits the set-valued property $\hat\Gamma : p \mapsto \{(\gamma(p),a) : \E_p[X_a]\!=\!\gamma(p)\}$, where $\gamma$ is defined in~\eqref{eq:prop-minimum}, $h:\gamma(\P)\to\reals_+$ is any strictly decreasing function, and $H(r) = \int_{r_0}^r h(x) dx$ for some $r_0 \in \gamma(\P)$.
\end{theorem}
\begin{proof}
  Working with gains instead of losses, we will show the equivalent result that
  $S((r,a),y) = g(r) + dg_{r}(X_{a} - r)$ elicits the combined property $\hat\Gamma : p \mapsto \{(\gamma(p),a) : \E_p[X_a]=\gamma(p) \}$ for $\gamma(p) = \max_a \E_p[X_a]$.
  Here $g$ is a convex function with a strictly increasing and positive subgradient $dg$.

  For any fixed $a$, we have by the subgradient inequality,
  \[ S((r,a),p) = g(r) + dg_{r}(\E_p[X_{a}] - r) \leq g(\E_p[X_{a}]) = S((\E_p[X_{a}],a),p)~, \]
  and as $dg$ is strictly increasing, $g$ is strictly convex, so $r=\E_p[X_{a}]$ is the unique maximizer.
  Now letting $\tilde S(a,p) = S((\E_p[X_{a}],a),p)$, we have
  \[ \argmax_{a\in\A} \tilde S(a,p) = \argmax_{a\in\A} g(\E_p[X_{a}]) = \argmax_{a\in\A} \E_p[X_{a}]~, \]
  because $g$ is strictly increasing.  We now have
  \begin{align*}
    \argmax_{a\in\A,r\in\reals} S((r,a),p)
    &= \Bigl\{(\E_p[X_{a}],a):a\in\argmax_{a\in\A} \E_p[X_{a}]\Bigr\}~.
  \end{align*}
\end{proof}

We briefly mention various forms of Theorem~\ref{thm:elic-minimum} which have appeared in the literature.
Most recently, a similar result appears independently in the Master's thesis of Jonas Brehmer~\citeyearpar{brehmer2017elicitability}.
The loss function of Fissler and Ziegel~\citeyearpar{fissler2016higher} for expected shortfall is a special case of Theorem~\ref{thm:elic-minimum}, and indeed a careful inspection of the former gave the inspiration for the latter.
Earlier work of Peter Gr\"unwald~\citeyearpar{grunwald1999viewing,grunwald2008that} gives a version of Theorem~\ref{thm:elic-minimum} in the context of the minimum description length principle; here the description length is defined in terms of a given loss function and a parameter $\beta$, and for certain ``simple'' classes of losses, the $\beta$ value minimizing the description length is precisely the Bayes risk of the given loss.
Finally, concurrent to our work, Fissler and Ziegel~\citeyearpar{fissler2019elicitability} give a construction for Range Value at Risk, which motivates a more general construction for linear combinations of minimum expectations in the form~\eqref{eq:prop-minimum}; see \S~\ref{sec:ex-es}.

Proving the upper bound in our main theorem, that the Bayes risk of a loss eliciting a $k$-dimensional property is itself $(k+1)$-elicitable, is a straightforward corollary of Theorem~\ref{thm:elic-minimum}.
Specifically, given a loss $L:\reals^k\times\Y \to \reals$ eliciting $\Gamma:\P\to\reals^k$, we simply let $X_a = L(a,Y)$ so that the pointwise minimum becomes the Bayes risk $\gamma(p) = \lbar(p)$; Theorem~\ref{thm:elic-minimum} then states that, as long as $(\lbar,\Gamma) \in \C$, we have $\elic_\C(\lbar) \leq k+1$.
The infimum in the definition of the Bayes risk is attained as $L$ elicits $\Gamma$.

\begin{corollary}
  \label{cor:elic-complex-bayes-risk}
  If $L:\reals^k\times\Y \to \reals$ elicits $\Gamma:\P\to\reals^k$, $k\in\N\cup\{\infty\}$, then the loss
  \begin{equation}
    \label{eq:elic-bayes-risk-cor}
    L^*((r,a),y) = L'(a,y) + H(r) + h(r)(L(a,y) - r)
  \end{equation}
  elicits $\{\lbar,\Gamma\}$, where $h:\reals\to\reals_+$ is any positive strictly decreasing function, $H(r) = \int_0^r h(x) dx$, and $L'$ is any other loss weakly eliciting $\Gamma$.
  If $(\lbar,\Gamma)\in\C$, $\elic_\C(\lbar) \leq k+1$.
\end{corollary}

To illustrate the upper bound, let us return to the variance example.
Take $X_a = (Y-a)^2$ to be squared loss, so that $\gamma(p) = \min_a \E_p[(Y-a)^2]$, and because squared loss is minimized by the mean $a = \E_p[Y]$, we have $\gamma(p) = \E_p[(Y-\E_p[Y])^2] = \Var(p)$.
Theorem~\ref{thm:elic-minimum} therefore states that $\hat\Gamma : p \mapsto (\Var(p),\E_p[Y])$ is elicitable.
Corollary~\ref{cor:elic-complex-bayes-risk} is more direct: as squared loss $L(r,y) = (r-y)^2$ elicits the mean, and $\lbar(p) = \Var(p)$, for any class of properties $\C$ where $(\Var,\E[Y])\in\C$ we have $\elic_\C(\Var) \leq 2$.
Interestingly, we do not have $(\Var,\E[Y])\in\Clin$, but as described in \S~\ref{sec:ex-variance}, the upper bound for $\Clin$ still holds by way of the first two moments.
In that section we also illustrate that
Theorem~\ref{thm:elic-minimum} does not characterize all possible loss functions to elicit the joint property $\hat\Gamma$.

\subsection{Lower Bound}
\label{sec:bayes-risk-lower-bound}

We now turn to lower bounds.
A first observation is that $\lbar$ is concave, and thus unlikely to be elicitable directly, as the level sets of $\lbar$ are likely to be non-convex.
To show a lower bound greater than 1, however, we will need much stronger techniques.
In particular, while $\lbar$ must be concave, it may not be strictly so.
Indeed, $\lbar$ must be flat between any two distributions which share a minimizer.
Crucial to our lower bound is the fact that whenever the minimizer of $L$ \myemph{differs} between two distributions, $\lbar$ is essentially strictly concave between them.

\begin{lemma}
  \label{lem:elic-complex-bayes-concave}
  \label{LEM:ELIC-COMPLEX-BAYES-CONCAVE} 
  Suppose the loss $L$ with Bayes risk $\lbar$ elicits $\Gamma:\P\to\R$.  Then for any $p,p'\in\P$ with $\Gamma(p)\neq\Gamma(p')$, we have $\lbar(\lambda p + (1-\lambda) p') > \lambda \lbar(p) + (1-\lambda) \lbar(p')$ for all $\lambda\in(0,1)$.
\end{lemma}

We can now prove our main lower bound, that the Bayes risk of a loss eliciting $\Gamma$ has complexity at least that of $\Gamma$.
The argument proceeds by showing that if we elicit the Bayes risk indirectly through some $\hat\Gamma$, then $\hat\Gamma$ must refine $\Gamma$ by Lemma~\ref{lem:elic-complex-bayes-concave}, from which the result follows.

\begin{theorem}
  \label{thm:bayes-risk-lower-bound}
  Let class of properties $\C$ be given.
  If $L$ elicits $\Gamma$, and $\elic_\C(\lbar)$ is defined, then $\elic_\C(\lbar) \geq \elic_\C(\Gamma)$,
  with equality if $\lbar=f\circ\Gamma$ for some function $f$.
\end{theorem}
\begin{proof}
  Let $\ell=\elic_\C(\lbar)$, so that we have some $\hat\Gamma\in\EL_\ell\cap\C$ and $g:\reals^\ell\to\reals$ such that $\lbar = g \circ \hat \Gamma$.
  We show by contradiction that $\hat\Gamma$ refines $\Gamma$.
  Otherwise, we have $p,p'$ with $\hat\Gamma(p)=\hat\Gamma(p')$, and thus $\lbar(p) = \lbar(p')$, but $\Gamma(p) \neq \Gamma(p')$.
  Lemma~\ref{lem:elic-complex-bayes-concave} would then give us some $p_\lambda = \lambda p + (1-\lambda) p'$ with $\lbar(p_\lambda) > \lbar(p)$,
  but as the level sets $\hat\Gamma_{\hat r}$ are convex by Proposition~\ref{prop:elic-complex-level-sets-convex}, we would have $\hat\Gamma(p_\lambda) = \hat\Gamma(p)$, which would imply $\lbar(p_\lambda)=\lbar(p)$.
  Thus, $\hat\Gamma$ must refine $\Gamma$, so by Lemma~\ref{lem:refine}, $\elic_\C(\lbar) = \ell \geq \elic_C(\hat\Gamma) \geq \elic_\C(\Gamma)$.
  If $\lbar=f\circ\Gamma$ then $\Gamma$ refines $\lbar$, so we also have $\elic_C(\Gamma) \geq \elic_\C(\lbar)$.
\end{proof}

We now restate and prove our main theorem.

\tempsetcounter{theorem}{0}
\begin{theorem}
  Let $L:\reals^k\times\Y \to \reals$ be a loss function eliciting $\Gamma:\P\to\reals^k$, $k\in\N\cup\{\infty\}$, and $\lbar$ be its Bayes risk.
  If $(\lbar,\Gamma)\in \C$ and $\elic_\C(\Gamma)=k$, then $\elic_\C(\lbar) \in \{k,k+1\}$.
  Moreover, the loss
  \begin{equation*}
    L^*((r,a),y) = L'(a,y) + H(r) + h(r)(L(a,y) - r)
  \end{equation*}
  elicits $\{\lbar,\Gamma\}$, where $h:\reals\to\reals_+$ is any positive strictly decreasing function, $H(r) = \int_0^r h(x) dx$, and $L'$ is any other loss weakly eliciting $\Gamma$.
\end{theorem}
\restorecounter{theorem}
\begin{proof}
  Corollary~\ref{cor:elic-complex-bayes-risk} gives form of the loss and the upper bound $\elic_\C(\lbar) \leq k+1$.
  For lower bound, Theorem~\ref{thm:bayes-risk-lower-bound} together with the assumption $\elic_\C(\Gamma)=k$ gives $\elic_\C(\lbar) \geq \elic_\C(\Gamma) = k$.
\end{proof}

\subsection{Bounds for Specific Property Classes}
\label{sec:bayes-risk-specific-classes}

We now turn to results for specific choices of the class $\C$.
To begin, Proposition~\ref{prop:tighter-lower-bound} gives tighter lower bounds when $\C\subseteq\I$, the weakest of the classes we consider.
This specialization is useful; often the most difficult requirement of Theorem~\ref{thm:bayes-risk-lower-bound} is to show $\elic_\C(\Gamma)=k$, but this is implied by Condition~\ref{cond:v-interior} when $\C\subseteq\I$.
To further tighten the lower bound to $\elic_\C(\lbar) \geq \elic_\C(\Gamma)+1$, we essentially must rule out the case where $\lbar$ is a link of $\Gamma$.
This case does arise; for example, dropping the $y^2$ term from squared loss gives $L(x,y) = x^2-2xy$ and $\lbar(p)=-\E_p[Y]^2$, which yields $\elic_\C(\lbar)=1$ for any reasonable choice of $\C$, e.g., $\C=\I$.
To rule out this case, we assume that $\lbar$ is not constant on some level set $\Gamma_r$ which satisfies Condition~\ref{cond:v-interior}.
The proof then argues that if $\elici(\lbar) = \elici(\Gamma)$, some level set of $\lbar$ must contain $\Gamma_r$, a contradiction.
It also argues that we may replace the condition $(\lbar,\Gamma)\in\C$ by $\Gamma\in\I$.

\begin{corollary}
  \label{cor:bayes-risk-ident-lower-bound}
  Let $L$ elicit some $\Gamma\in\ID_k(\P)$, $k\in\N$.
  If $\Gamma$ refines $\lbar$, then $\elici(\lbar) = k$.
  If $\Gamma$ satisfies Condition~\ref{cond:v-interior} for some $r\in\Gamma(\P)$ and $\lbar$ is non-constant on $\Gamma_r$, then $\elic_\I(\lbar) = k+1$.
\end{corollary}

We now restate and prove Proposition~\ref{prop:tighter-lower-bound}, which we used extensively in our applications.
\tempsetcounter{proposition}{2}
\begin{proposition}
  Let $L:\reals^k\times\Y \to \reals$ be a loss eliciting $\Gamma \in \I_k$, $k\in\N$.
  If $\Gamma$ satisfies Condition~\ref{cond:v-interior} for some $r\in\Gamma(\P)$, and $\lbar$ is non-constant on $\Gamma_r$, then $\elici(\lbar) = k+1$.
  If additionally $(\lbar,\Gamma)\in\C$ for some $\C \subseteq \I$, then $\elic_\C(\lbar) = k+1$.
\end{proposition}
\restorecounter{proposition}
\begin{proof}
  The second statement follows from Corollary~\ref{cor:bayes-risk-ident-lower-bound}.
  The third follows from Theorem~\ref{thm:bayes-risk-lower-bound} together with Proposition~\ref{prop:nested-convex-classes-long} giving $\elic_\C(\lbar) \geq \elici(\lbar)$.
\end{proof}

We now turn to upper and lower bounds for strictly and strongly convex losses.  We provide the full treatment in the supplemental material.  Here we state our main conclusion for strongly convex losses; the result for strict convexity is similar but requires some additional assumptions.

\begin{proposition}
  \label{prop:str-cvx-elic}
  Let $\Gamma\in\Cstrong$, $\Gamma:\P\to\reals^k$, $k\in\N$, be elicited by a differentiable, bounded, strongly convex $L$.
 If $\Gamma$ satisfies Condition~\ref{cond:v-interior} for some $r\in\Gamma(\P)$, and $\lbar$ is non-constant on $\Gamma_r$, then $\elic_\Cstrong(\lbar) = k+1$.
\end{proposition}

\section{Discussion and Open Questions}
\label{sec:discussion}

As discussed above, our notion of elictiation complexity, Definition~\ref{def:elic-complex}, builds on \citet{lambert2008eliciting} among other work.
We believe our definition is best suited to studying the difficulty of eliciting properties:
viewing $f$ as a potentially dimension-reducing link function, our definition captures the minimum number of dimensions needed in a point estimation or empirical risk minimization for the property in question, followed by a simple one-time application of $f$.
For a comparison to other definitions in the literature and further discussion, see \S~\ref{sec:additional}.

\label{sec:open-problems}
Many natural problems in elicitation complexity remain open.
Most apparent are the characterizations of the complexity classes $\{\Gamma : \elic_\C(\Gamma) = k\}$, and in particular, determining the elicitation complexity of non-elicitable properties.
For example, subsequent to our work, the complexity of the mode is shown to be infinite \citep{dearborn2019indirect}, while that of the smallest prediction interval remains open \citep{frongillo2014general}.
We identify other future directions below.

\paragraph{Tighter characterization for Bayes risks}
Consider a loss $L$ eliciting some property $\Gamma$ of elicitation complexity $k$.
Intuitively, Corollary~\ref{cor:bayes-risk-ident-lower-bound} says the elicitation complexity of the Bayes risk $\lbar$ is $k+1$, unless $\lbar$ happens to be a link of $\Gamma$.
Yet we lack a characterization of properties $\Gamma$ for which $\lbar = f \circ \Gamma$ for some link $f$ and some $L$ eliciting $\Gamma$.
We conjecture that this relationship is only possible if $\Gamma$ is link of a linear property, i.e.,\ $\Gamma(p) = \varphi(\E_p[g(Y)])$ for some invertible $\varphi$ and arbitrary $g$.
As intuition, $\lbar(p)$ must have slope zero along level sets of $\Gamma$.

\paragraph{General convex losses}
Throughout the paper, when working with convex losses, we have insisted that they be smooth and strictly convex.
An important future direction is to study the natural class $\Ccvx$ of properties elicited by any convex loss.
Our results do not apply to this class, as fundamentally our lower bounds rely on identifiability, i.e., $\C\subseteq\I$, whereas $\Ccvx \not\subseteq \I$.
Remark~\ref{remark:everything-1-elic} shows that the class $\Ccvx$ is restrictive enough to prevent $\elic_{\Ccvx}(\Gamma) = 1$ for all properties $\Gamma$ \citep{ramaswamy2013convex}.
While some results for $\elic_\Ccvx$ have appeared in the machine learning literature, for settings such as classification or ranking \citep{bartlett2006convexity,ramaswamy2013convex} and some more general results under the name \myemph{convex calibration dimension} \citep{ramaswamy2016convex,agarwal2015consistent}, tight bounds remain elusive in general.

\paragraph{Conditional elicitation}
Another interesting direction is conditional elicitation: properties which are elicitable as long as the value of some other elicitable property is known.
This notion was introduced by Emmer et al.~\citeyearpar{emmer2015what}, who showed that the variance and expected shortfall are both conditionally elicitable, on the mean $\E_p[Y]$ and quantile $q_\alpha(p)$, respectively.
Intuitively, knowing that $\Gamma$ is elicitable conditional on an elicitable $\Gamma'$ would suggest that perhaps the pair $\{\Gamma,\Gamma'\}$ is elicitable; Fissler and Ziegel~\citeyearpar{fissler2016higher} It is an open question whether and when this joint elicitability holds in general.
From our results, we now see a broad class of properties for which this joint elicitability does hold: the Bayes risk $\lbar$, of a loss $L$ eliciting $\Gamma$, is elicitable conditioned on $\Gamma$, and the pair $\{\Gamma,\lbar\}$ is jointly elicitable from Theorem~\ref{thm:elic-minimum}.
We give a counter-example in Figure~\ref{fig:3d-examples}, however, with a property which is conditionally elicitable but not jointly.

\subsection*{Acknowledgements}
We would like to thank
Yiling Chen,
Krisztina Dearborn,
Jessie Finocchiaro,
Tobias Fissler,
Tilmann Gneiting,
Peter Gr\"unwald,
Nicolas Lambert,
Ingo Steinwart,
Bo Waggoner,
Ruodu Wang,
Jens Witkowski,
and
Johanna Ziegel,
for helpful comments, discussions, and references.
We thank anonymous reviewers for the insights in Remark~\ref{remark:spectral-risk-bound} and on pairs of properties in Lemma~\ref{lem:refine}.
This work was funded in part by National Science Foundation Grant CCF-1657598.

\newpage
\appendix

\section{Proof of Proposition~\ref{prop:nested-convex-classes-long}}
\label{sec:proof-nested}

In the case of $k = \infty$, we interpret the $\reals^\infty$ in the statement of the proposition as the sequence space $\ell^2$, and require Fr\'echet differentiability in $\Cstrict$.
The restriction that properties in our four classes must take on values which are square-summable is important for the loss $L$ in the proof, e.g.\ to have $\Clin \subseteq \Cstrong$.

\begin{proof}
  Let $\Gamma\in\Clin$, so that $\Gamma(p) = \E_p[\phi(Y)]$ for some $\phi$.
  Taking the loss $L(r,y) = \|r\|^2 - 2 r \cdot \phi(y)$, which is differentiable and Lipschitz continuous on the (assumed bounded) domain of $r$, and furthermore strongly convex with constant $\mu = 2$, showing $\Gamma \in \Cstrong$.
  The inclusion $\Cstrong\subseteq\Cstrict$ is immediate from the definition.
  Finally, let $L(r,y)$ be a differentiable, Lipschitz-continuous, strictly convex loss function eliciting $\Gamma$.
  Letting $V(r,y) = \nabla_r L(r,y)$, we have $\Gamma(p) = r \implies \nabla_r \E_p L(r,Y) = 0$.
  As $L$ is Lipschitz continuous, the dominated convergence theorem gives us $\nabla_r \E_p L(r,Y) = 0 \iff \E_p \nabla_r L(r,Y) = 0$.
  Conversely, as $\E_p L(r,Y)$ is strictly convex, we have $\nabla_r \E_p L(r,Y) = 0$ implies optimality of $r$, which in turn gives $\Gamma(p)=r$.
  This shows $\Cstrict\subseteq\I$, which completes the chain of inclusions.
  As $\Gamma_\id\in\Clin$, and every property is a link of $\Gamma_\id$, the corresponding complexities are all well-defined, and the inequalities follow immediately from the inclusions.
\end{proof}

\section{Omitted Material from Section~\ref{sec:elic-complex-exampl-appl}}
\label{sec:omitted-appl}

\subsection{Proof of Theorem~\ref{thm:lin-comb-bayes-risk}}
\label{sec:proof-lin-comb-bayes-risks}

We state and prove a stronger result, which provides a form for the loss function.
The scope of loss functions given below matches those found by Fissler and Ziegel~\citeyearpar{fissler2019elicitability}; see \S~\ref{sec:elic-complex-loss-expect-shortf}.
The proof (\S~\ref{sec:proof-lin-comb-bayes-risks}) is a straightforward adaptation of Theorem~\ref{thm:main-theorem}, with the addition of the $\min(0,\alpha_i)$ term to ensure the coefficient of $L_i$ is always positive.

\begin{theorem}\label{thm:lin-comb-bayes-risk-full}
For each $i\in\{1,\ldots,m\}$ let $L_i:\reals^{k_i}\times\Y \to \reals$ be a loss eliciting $\Gamma_i:\P\to\reals^{k_i}$, with Bayes risk $\lbar_i$.
Let $\gamma(p) = \sum_{i=1}^m \alpha_i \lbar_i(p)$ for $\alpha_i \in \reals\setminus\{0\}$.
Then the loss
  \begin{equation*}
    L^*((r,a_1,\ldots,a_m),y) = \sum_{i=1}^m L'_i(a_i,y) + \sum_{i=1}^m (h(r)\alpha_i - c\min(0,\alpha_i)) L_i(a_i,y) + H(r) - h(r) r
  \end{equation*}
  elicits $\{\gamma,\Gamma_1,\ldots,\Gamma_m\}$, where $c>0$, $h:\reals\to(0,c)$ is strictly decreasing, $H(r) = \int_0^r h(x) dx$, and for each $i$, $L'_i$ is any loss weakly eliciting $\Gamma_i$.
  In particular, if $\{\gamma,\Gamma_1,\ldots,\Gamma_m\}\in\C$, $\elic_\C(\gamma) \leq \sum_{i=1}^m k_i+1$.
\end{theorem}

\begin{proof}
  Let us first unpack the coefficient $c_i$ of $L_i(a_i,y)$, which is given by
  \begin{equation*}
    c_i := h(r)\alpha_i - c \min(0,\alpha_i) =
    \begin{cases}
      h(r) \alpha_i & \alpha_i \geq 0\\
      (h(r) - c)\alpha_i & \alpha_i < 0
    \end{cases}~.
  \end{equation*}
  As we have $h:\reals \to (0,c)$, we see that $c_i > 0$ in both cases.
  For each $i$, the terms involving $a_i$ are $L'_i(a_i,y) + c_i L_i(a_i,y)$, which therefore constitute a loss function eliciting $\Gamma_i$.
  Thus, for each fixed value of $r$, the expected loss $\E_p L^*((r,a_1,\ldots,a_m),Y)$ is uniquely minimized by taking $a_i = \Gamma_i(p)$ for all $i$.
  The remainder of the proof, that the minimizing value of $r$ is $\gamma(p)$, follows directly from the proof of Theorem~\ref{thm:elic-minimum}.
\end{proof}

\subsection{Complexity of Spectral Risk Measures}
\label{sec:compl-expect-shortf}

Let $\P_s$ be any family of distributions with finite expectations such that for all $a\in\reals$ there is some $p\in\P_s$ with support contained in $[a,\infty)$.
Pareto distributions are an example of such a family.
Let $\P$ contain all mixtures of distributions in $\P_s$.
We will show that for any $\alpha_1 < \cdots < \alpha_k$, there are two distributions $p,p'$ with $q_{\alpha_i}(p) = q_{\alpha_i}(p')$ but $\rho_\mu(p) \neq \rho_\mu(p')$.
The intuition is simple: modify the distribution $p$ beyond its last quantile $q_{\alpha_k}(p)$ by moving mass toward increasing values, thus keeping the quantiles the same but increasing the expected value of the tail.

Let $p_1$ be any mixture of distributions from $\P_s$, let $\alpha_{k+1}$ such that $\alpha_k < \alpha_{k+1} < 1$, and take $a > q_{\alpha_{k+1}}(p_1)$.
Let $p_2$ be any distribution in $\P_s$ with support on $[a,\infty)$, and take $p = (\alpha_k/\alpha_{k+1}) p_1 + (1 - \alpha_k/\alpha_{k+1}) p_2$.
By construction, we have $q_{\alpha_k}(p) = q_{\alpha_{k+1}}(p_1) < a$.

To construct $p'$ we will simply replace $p_2$ with a distribution of higher mean, which will not modify the relevant quantiles.
To this end, let $a' = 1 + \E_{p_2}[Y]$, let $p_2'\in\P_s$ with support on $[a',\infty)$, and take $p' = (\alpha_k/\alpha_{k+1}) p_1 + (1 - \alpha_k/\alpha_{k+1}) p_2'$.
By the same logic as above, we have $q_{\alpha_k}(p) = q_{\alpha_{k+1}}(p_1)$, which implies $q_{\alpha_i}(p) = q_{\alpha_i}(p')$ for all $i$, as the distributions only differ in the interval $[a,\infty)$ and $a > q_{\alpha_k}(p) = q_{\alpha_k}(p')$.
Note, however, that we do have $\E_{p_2'}[Y] > a' = \E_{p_2}[Y]$.

Using the interpretation of $\ES_\alpha$ as the expected value of $Y$ conditioned on being beyond the $\alpha$ quantile, we have,
\begin{align*}
  \ES_{\alpha_i}(p)
  &= (\alpha_k/\alpha_{k+1}) \ES_{\alpha_i}(p_1) + (1 - \alpha_k/\alpha_{k+1})\E_{p_2}[Y]
  \\
  &< (\alpha_k/\alpha_{k+1}) \ES_{\alpha_i}(p_1) + (1 - \alpha_k/\alpha_{k+1})\E_{p_2'}[Y]
  \\
  &= \ES_{\alpha_i}(p')~.
\end{align*}
As the construction above works for any vector of quantiles we choose, doing it for $k+1$ sets of coefficients $\beta_i$ for which $\alpha_i$ is in the interior, gives Condition~\ref{cond:v-interior} and thus Corollary~\ref{cor:spectral-risks}.

\subsection{Losses for Expected Shortfall and Range Value at Risk}
\label{sec:elic-complex-loss-expect-shortf}

Corollary~\ref{cor:elic-complex-bayes-risk} gives us a large family of losses eliciting $\{\ES_\alpha,q_\alpha\}$.
Letting $L_\alpha(a,y) = \frac 1 \alpha(a-y)\ones_{a\geq y}-a$, we have $\ES_\alpha(p) = \inf_{a\in\reals} L_\alpha(a,p)$.
Thus we may take
\begin{equation}
  \label{eq:prop-notes-3}
  L((r,a),y) = L'(a,y) + H(r) + h(r)(L_\alpha(a,y) - r)~,
\end{equation}
where $h(r)$ is positive and strictly decreasing, $H(r) = \int_0^r h(x)dx$, and $L'(a,y)$ is any other loss eliciting $q_\alpha$, the full characterization of which is given in \citet[Theorem 9]{gneiting2011making}:
\begin{equation}
  \label{eq:prop-notes-4}
  L'(a,y) = (\ones_{a\geq y}-\alpha)(f(a)-f(y)) + g(y)~,
\end{equation}
where is $f:\reals\to\reals$ is nondecreasing and $g$ is an arbitrary $\P$-integrable function.
As an aside, Gneiting~\citeyearpar{gneiting2011making} assumes $L(x,y)\geq 0$, $L(x,x)=0$, $L$ is continuous in $x$, $dL/dx$ exists and is continuous in $x$ when $y\neq x$; we add $g$ because we do not normalize.
Hence, losses of the following form suffice:
\begin{align*}
  L((r,a),y)
  & = (\ones_{a\geq y}-\alpha)(f(a)-f(y))\\&~~~~+ \frac 1 \alpha h(r) \ones_{a\geq y} (a-y)  - h(r) (a + r) + H(r) + g(y)~.
\end{align*}
Comparing our family of losses $L((r,a),y)$ to the characterization given by \citet[Cor. 5.5]{fissler2016higher}, we see that we recover
 all possible scores for this case, at least when restricting to the assumptions stated in their Theorem 5.2(iii).  Note however that due to a differing convention in the sign of $\ES_\alpha$, their loss is given by $L((-x_1,x_2),y)$.

Similarly, the losses we obtain for $\mathrm{RVaR}_{\alpha,\beta}$ from Theorem~\ref{thm:lin-comb-bayes-risk} are given by the following, where $f_1,f_2$ are nondecreasing and again $g$ is an arbitrary $\P$-integrable function.
\begin{align*}
  L^*((r,a_1,a_2),y)
  & = (\ones_{a_1\geq y}-\alpha)(f_1(a_1)-f_1(y)) + (\ones_{a_2\geq y}-\beta)(f_2(a_2)-f_2(y))\\
  &~~~~- (h(r)-c) \frac \alpha {\beta-\alpha} L_\alpha(a_1,y)
  + h(r) \frac \beta {\beta-\alpha} L_\beta(a_2,y)
    + H(r) - h(r) r + g(y)\\
  & = (\ones_{a_1\geq y}-\alpha)(f_1(a_1)-f_1(y)) + (\ones_{a_2\geq y}-\beta)(f_2(a_2)-f_2(y)) + c\frac{\alpha}{\beta-\alpha} L_\alpha(a_1,y)\\
  &~~~~+ h(r) \left(\frac 1 {\beta-\alpha} \left(\beta L_\beta(a_2,y) - \alpha L_\alpha(a_1,y)\right) - r\right)
    + H(r) + g(y)\\
  & = (\ones_{a_1\geq y}-\alpha)(f_1'(a_1)-f_1'(y)) + (\ones_{a_2\geq y}-\beta)(f_2(a_2)-f_2(y))\\
  &~~~~+ h(r) \left(\frac 1 {\beta-\alpha} \left(\beta L_\beta(a_2,y) - \alpha L_\alpha(a_1,y)\right) - r\right)
    + H(r) + g'(y)~,
\end{align*}
where $f_1'(a_1) = f_1(a_1) + \tfrac{c}{\beta-\alpha}a_1$ and $g'(y) = g(y) + \frac{c\alpha}{\beta-\alpha}y$.
Comparing now with Fissler and Ziegel~\citeyearpar{fissler2019elicitability}, modulo the difference in sign convention noted above, we see that this family of losses is equivalent to \citet[eq.~(3.2)]{fissler2019elicitability}, as the condition that $a_1 \mapsto f_1'(a_1) - a_1 h(r)/(\beta-\alpha)$ be strictly increasing is equivalent to $f_1(a_1) = f_1'(a_1) - a_1 c/(\beta-\alpha)$ being nondecreasing.
Recall that $h:\reals\to(0,c)$; without loss of generality we may assume $c$ is the supremum.

\subsection{Complexity of Variantiles}
\label{sec:omitted-variantiles}

To establish Corollaries~\ref{cor:expectile} and \ref{cor:multivariate-expectile}, we will show three statements:
(1) $\{\Var^{(k)}_\tau,\mu^{(k)}_\tau\} \in \I$ and furthermore $\{\Var^{(k)}_\tau,\mu^{(k)}_\tau\} \in \Cstrong$ when $\Y$ is bounded,
(2) Condition~\ref{cond:v-interior} holds for $\mu^{(k)}_\tau$ and some $r\in\reals^k$, and
(3) there are at least two distributions $p,p'\in\P$ with $\mu^{(k)}_\tau(p)=\mu^{(k)}_\tau(p')=r$ but $\Var^{(k)}_\tau(p)\neq\Var^{(k)}_\tau(p')$.
By assumption, Statements 2 and 3 only require proof for the specific case of $\P=\gaussians$.
Both corollaries will then follow from Proposition~\ref{prop:tighter-lower-bound}.

\paragraph{Statement 1}
Recall that we define $L(x,y) = \|y-x\|_2(\|y-x\|_2+\inprod{\tau,y-x})$.
\citet[Theorems 4.1, 4.3]{herrmann2018multivariate} show that $L$ is differentiable and strictly convex, from which we conclude that $V(x,y) = \nabla_x L(x,y)$ is an identification function for $\mu^{(k)}_\tau$; see the proof of Proposition~\ref{prop:nested-convex-classes-long}.
Thus, by the proof of Corollary~\ref{cor:bayes-risk-ident-lower-bound}, we have $\{\mu^{(k)}_\tau,\Var^{(k)}_\tau\} \in \I$.

To show strong convexity, let $\Lambda_\tau(v) = \|v\|_2(\|v\|_2+\inprod{\tau,v})$, so that we have $L(x,y) = \Lambda_\tau(y-x) = \|y-x\|_2(\|y-x\|_2+\inprod{\tau,y-x})$; we will show that $\Lambda_\tau$ is strongly convex.
In what follows, we drop the subscript in the norm and write $\|\cdot\|=\|\cdot\|_2$.
The proof given in \citet[Theorem 4.3]{herrmann2018multivariate} that $\Lambda_\tau$ is strictly convex proceeds by showing $D(v,w) = \tfrac 1 2 \Lambda_\tau(v) + \tfrac 1 2 \Lambda_\tau(w) - \Lambda_\tau(\tfrac 1 2 v+ \tfrac 1 2 w)$ is strictly positive.
This is done by expanding $4\cdot D$,
\begin{equation}
  \label{eq:mult-expectile-D}
  4 D(v,w) = \|v-w\|^2 + 2\|v\|\inprod{\tau,v} + 2\|w\|\inprod{\tau,w} - \|v+w\|\inprod{\tau,v+w}~,
\end{equation}
and showing that $D(v,w) \geq 0$ whenever $\|\tau\| \leq 1$, with an inequality for $v\neq w$ if $\|\tau\|<1$ \citep[Theorem 4.2]{herrmann2018multivariate}.
Convexity follows as $\Lambda_\tau$ is continuous.

By standard results \citep[Proposition B.1.1.2]{urruty2001fundamentals}, strong convexity of $\Lambda_\tau$ would follow by showing $D(v,w) \geq c\|v-w\|^2$ for some $c$.
Examining eq.~\eqref{eq:mult-expectile-D}, we see that all terms apart from the $\|v-w\|^2$ term are linear in $\tau$.
Thus, replacing $\tau$ by $\tau/\|\tau\|$ in eq.~\eqref{eq:mult-expectile-D} still satisfies \citet[Theorem 4.2]{herrmann2018multivariate}, giving us
\begin{align*}
  0 \leq& \|v-w\|^2 + 2\|v\|\inprod{\tau/\|\tau\|,v} + 2\|w\|\inprod{\tau/\|\tau\|,w} - \|v+w\|\inprod{\tau/\|\tau\|,v+w}\\
= & \tfrac 1 {\|\tau\|} \left( \|\tau\| \|v-w\|^2 + 2\|v\|\inprod{\tau,v} + 2\|w\|\inprod{\tau,w} - \|v+w\|\inprod{\tau,v+w} \right)\\
= & \tfrac 1 {\|\tau\|} \left( D(v,w) - (1-\|\tau\|) \|v-w\|^2 \right)~.
\end{align*}
Thus, letting $c = 1-\|\tau\| > 0$, we have strong convexity of $\Lambda_\tau$.
We conclude that $L$ is strongly convex.
When $\Y\subseteq\reals$ is bounded, Proposition~\ref{prop:str-cvx-elic} gives $\{\mu^{(k)}_\tau,\Var^{(k)}_\tau\} \in \Cstrong$, as desired.

\paragraph{Statement 2}
\newcommand{\ksphere}{{S^{k-1}}}
We will prove the multivariate case, which subsumes the univariate case.
The proof will make use of support functions and the Hausdorff metric; we now recall the necessary definitions and standard results.
The support function $h_K:\ksphere\to\reals\cup\{\infty\}$ of a set $K \subseteq \reals^k$ is given by $h_K(v) = \sup_{x\in K} \inprod{v,x}$.
The Hausdorff distance $d_H(A,B)$ between two closed sets $A,B\subseteq \reals^k$, is defined by $d_H(A,B) = \max\{\sup_{x\in A} d(x,B), \sup_{x\in B} d(x,A)\}$, where $d(x,S) = \min_{y\in S} \|y-x\|$ is the distance between a point $x\in\reals^k$ and a set $S\subseteq \reals^k$.
We have the following facts.
\begin{enumerate}
\item For all compact convex $A,B\subseteq\reals^k$, $\max_{v\in\ksphere}|h_A(v)-h_B(v)| = d_H(A,B)$.
\item For all compact $A,B\subseteq\reals^k$, $d_H(\conv A,\conv B) \leq d_H(A,B)$.
\item For all convex $A\subseteq\reals^k$, we have $0\in\interior A \iff \forall v\in\ksphere\; d_A(v) > 0$.
\end{enumerate}
The first and third fact may be found in \citet[Theorem C.3.3.6 \& Theorem C.2.2.3(iii)]{urruty2001fundamentals}.
The second follows by taking a convex combination $x = \sum_i \lambda_i a_i$ of elements $a_i\in A$, and approximating each $a_i$ within $d_H(A,B)$ by elements $b_i \in B$.

To show Statement 2, we must establish $0 \in \interior\{\E_p[V(x,Y)]:p\in\gaussians\} \subseteq \reals^k$ for some identification function $V$ and some $x\in\reals^k$.
We will take $V$ to be the identification function from Statement 1, and $x=0$.
For any $p\in\gaussians$, we have
\begin{align}
  \E_p[V(0,Y)]
  \label{eq:expectile-iden-func}
  &= \E_p\left[2Y + \frac Y {\|Y\|}\inprod{\tau,Y} + \|Y\|\tau\right]~.
\end{align}
Let $f: p \mapsto \E_pV(0,Y)$.
It therefore suffices to show $0\in\interior \phi(\gaussians)$.

Letting $\ksphere = \{x\in\reals^k : \|x\|=1\}$ be the unit sphere, define $z:\ksphere \to \reals^k$ by $z : \mu \mapsto 2\mu + \inprod{\tau,\mu}\mu + \tau$, which is the value of $\phi(p)$ when $p$ is sufficiently concentrated around $\mu$.
Let $Z = z(\ksphere) = \{z(x) : x\in \ksphere\}$ and $C=\conv Z$.
For all $v\in\ksphere$, we have
\begin{align*}
  h_C(v)
  &\geq \inprod{v,z(v)} \\
  &= 2\inprod{v,v} + \inprod{\tau,v} \inprod{v,v} + \inprod{v,\tau} \\
  &= 2 + 2 \inprod{\tau,v} \\
  &\geq 2(1-\|\tau\|) > 0~.
\end{align*}
Let $\epsilon = 1-\|\tau\| > 0$.
$Z$ is compact, as the continuous image of a compact set, and thus there exists a finite subset $Z'\subseteq Z$ with $d_H(Z',Z) < \epsilon$.
By definition of $Z$, we can write $Z' = z(S')$ for some finite $S' \subseteq \ksphere$.

From Lemma~\ref{lem:gaussian-sigma-limit}, for all $\mu\in S'$ we have some $\sigma(\mu) > 0$ such that $\|\phi({\cal N}(\mu,\sigma(\mu) I)) - z(\mu)\| < \epsilon$, where $\cal N$ is the multivariate Gaussian distribution.
Now take $\sigma = \min_{\mu \in S'} \sigma(\mu)$ and define $P=\{{\cal N}(\mu,\sigma I) : \mu\in S'\}$ and $Z'' = \phi(P)$.
We therefore have $d_H(Z',Z'') < \epsilon$.
Letting $C'' = \conv Z''$, we have
\[d_H(C'',C) \leq d_H(Z'',Z) \leq d_H(Z'',Z') + d_H(Z',Z) < 2\epsilon~.\]
Thus, for all $v\in\ksphere$, we have
\begin{align*}
  h_{C''}(v) > h_C(v) - 2\epsilon > 2\epsilon-2\epsilon = 0~,
\end{align*}
giving $0 \in \interior C''$.
As $\gaussians$ is convex, $f$ is linear, and $P \subseteq \gaussians$, we have $C'' = \conv \phi(P) = \phi(\conv P) \subseteq \phi(\gaussians)$.
We conclude $0\in\interior \phi(\gaussians)$, as desired.

\begin{lemma}\label{lem:gaussian-sigma-limit}
  For all $\mu \in \ksphere$,
  we have $\lim_{\sigma \to 0^+} \E_{{\cal N}(\mu,\sigma I)} [V(0,Y)] = z(\mu)$.
\end{lemma}
\begin{proof}
  Expanding eq.~\eqref{eq:expectile-iden-func}, we have
  \begin{align}
    \E_p[V(0,Y)] &= 2\E_p[Y] + \E_p\left[\inprod{\tau,Y}\frac Y {\|Y\|}\right] + \E_p[\|Y\|]\tau~.
  \end{align}
  Fix $\mu\in\ksphere$ and let $\{\sigma_n\}_{n\in\N}$ be a positive real sequence converging to zero.
  Define $Y^{(n)}\sim{\cal N}(\mu,\sigma_n^2 I)$.
  We thus have $Y^{(n)} \to \mu$ in probability.
Fix a coordinate $j\in\{1,\ldots,k\}$, let $f_j(y) = \inprod{\tau,y} \tfrac {y_j} {\|y\|} $ and $g(y) = \|y\|$, and define $X^{(n)} = f_j(Y^{(n)})$ and $Z^{(n)} = g(Y^{(n)})$.
As $f_j$ and $g$ are both continuous functions to the reals, we thus have both $X^{(n)} \to f_j(\mu)$ and $Z^{(n)} \to g(\mu) = 1$ in probability.
Uniform integrability of $X^{(n)}$ follows from the observation that $|X^{(n)}| \leq |Y^{(n)}_j|\|\tau\| < |Y^{(n)}_j|$, and appealing to uniform integrability of $Y^{(n)}_j\sim{\cal N}(\mu_j,\sigma_n^2)$.
For uniform integrability of $Z^{(n)}$, observe that $\|Y^{(n)}\|$ has a noncentral $\chi^2$ distribution, and thus $\E|Z^{(n)}|^2 = \E\|Y^{(n)}\|^2 = k\sigma_n^2 + \|\mu\| \leq k\sigma_1^2 + 1$; uniform integrability now follows from \citet[eq. (25.13)]{billingsley2008probability}.
We therefore have $\E[\inprod{\tau,Y^{(n)}}\frac {Y^{(n)}}{\|Y^{(n)}\|}] = \E[X^{(n)}] \to f_j(\mu) = \inprod{\tau,\mu}\mu_j$ and $\E[\|Y^{(n)}\|] = \E[Z^{(n)}] \to g(\mu) = 1$ \citep[Theorem 25.12]{billingsley2008probability}.
We conclude $\lim_{\sigma \to 0^+} \E_{{\cal N}(\mu,\sigma I)} [V(0,Y)]_j = 2\mu_j + f_j(\mu) + g(\mu) \tau_j = z(\mu)_j$.
\end{proof}

\paragraph{Statement 3}
We first illustrate the univariate case for intuition.
Let $p\in\gaussians$ with $\mu_\tau(p) = 0$ and $\Var_\tau(p) > 0$.
The latter is implied by nonzero variance.
Letting $X \sim p$, and $\lambda>0$, we have
\begin{equation*}
  \E\left[|\ones_{0\geq \lambda X}-\tau|(0-\lambda X)\right] =
  \lambda \E\left[|\ones_{0\geq X}-\tau|(0-X)\right] = 0~, 
\end{equation*}
meaning $\mu_\tau(p_\lambda)=0$ as well, where $p_\lambda$ is the law of $\lambda X$; note
$p_\lambda\in\gaussians$.
The variantile changes, however, whenever $\lambda \neq 1$:
\begin{equation*}
  \Var_\tau(p_\lambda) = \E\left[|\ones_{0\geq \lambda X}-\tau|(\lambda X)^2\right]=
  \lambda^2 \E\left[|\ones_{0\geq X}-\tau|X^2\right]=
  \lambda^2 \Var_\tau(p)~.
\end{equation*}
The statement now follows.

The multivariate case follows similarly.
Again let $p\in\gaussians$ satisfy $\mu^{(k)}_\tau(p)=0$ with a positive-definite covariance matrix, thus implying $\Var^{(k)}_\tau(p) > 0$.
Let $X\sim p$ and $\lambda>0$.
Let $p_\lambda$ be the law of $\lambda X$, and note $p_\lambda \in \gaussians$.
We now have
\begin{align*}
  \mu^{(k)}_\tau(p_\lambda)
  &= \argmin_{x\in\reals^k} \;\E\left[\|\lambda X-x\|_2(\|\lambda X-x\|_2+\inprod{\tau,\lambda X-x})\right] \\
  &= \argmin_{x\in\reals^k} \;\E\left[\|X-x/\lambda\|_2(\|X-x/\lambda\|_2+\inprod{\tau,X-x/\lambda})\right] \\
  &= \argmin_{x\in\reals^k} \;\E\left[\|X-x\|_2(\|X-x\|_2+\inprod{\tau,X-x})\right]
  \\
  &= \mu^{(k)}_\tau(p) = 0~.
\end{align*}
Turning to the variantile, we similarly have
\begin{align*}
  \Var^{(k)}_\tau(p_\lambda)
  &= \min_{x\in\reals^k} \;\E\left[\|\lambda X-x\|_2(\|\lambda X-x\|_2+\inprod{\tau,\lambda X-x})\right]
  \\
  &= \min_{x\in\reals^k} \lambda^2\E[\|X-x/\lambda\|_2(\|X-x/\lambda\|_2+\inprod{\tau,X-x/\lambda})]
  \\
  &= \lambda^2  \Var^{(k)}_\tau(p)~,
\end{align*}
which again gives a different value whenever $\lambda\neq 1$.

\section{Omitted Material from Section~\ref{sec:basic-results}}

\subsection{Identification Lower Bounds}

\begin{proof}[of Lemma~\ref{lem:main-iden-lower-bound}]
  We will simply apply Lemma~\ref{lem:lin-alg-nested-level-sets} with $\V = \spn \P$, $C=\P$, and $S = \Gamma_r$.
  Let $f : \spn \P \to \reals^k$ given by $f(q) = V(r,q)$, where we interpret $q$ as a signed measure.
  By Condition~\ref{cond:v-interior}, we have $0\in\interior f(\P)$.
  Now consider some $\hat V:\hat\R\times\Y\to\reals^\ell$ identifying $\hat\Gamma$, where $\hat\R = \hat\Gamma(\P)$ and $\ell \in \N$, such that $\hat\Gamma$ refines $\Gamma$.
  Refinement implies that for any $p\in \Gamma_r$, there is some $\hat r\in\hat\R$ such that $p \in \hat\Gamma_{\hat r} \subseteq \Gamma_r$.
  For any such $\hat r$, we may define $\hat f: \spn P \to \reals^\ell$ by $\hat f(q) = \hat V(\hat r,q)$.
  For any $p\in\Gamma_r$, we therefore have a linear $\hat f : \spn \P \to \reals^\ell$ such that $p \in \P \cap \ker \hat f \subseteq \Gamma_r$.
  The conditions of Lemma~\ref{lem:lin-alg-nested-level-sets} are now satisfied, giving $\ell \geq k$, and thus $\iden(\Gamma) \geq k$.  
\end{proof}

\begin{lemma}\label{lem:lin-alg-nested-level-sets}
  Let $\V$ be a real vector space.
  Let $f:\V\to\reals^k$ be linear and $C\subseteq \V$ convex with $\spn C = \V$.
  Suppose $0 \in \interior f(C)$.
  Let $S = C \cap \ker f$.
  If
  $\ell\in\N$ is such that
  for all $v\in S$, there exists a linear $\hat f:\V\to\reals^\ell$ with $v \in C \cap \ker \hat f \subseteq S$, then $\ell \geq k$.
\end{lemma}
\begin{proof}
  The condition $0 \in \interior f(C)$ is equivalent to the existence of some $v_1,\ldots v_{k+1} \in C$ such that $0\in\interior\conv\{f(v_i) : i\in\{1,\ldots,k+1\}\}$.
  Let $\alpha_1,\ldots,\alpha_{k+1}>0$, $\sum_{i=1}^{k+1} \alpha_i = 1$, such that $\sum_{i=1}^{k+1} \alpha_i f(v_i) = 0$.
  As these are barycentric coordinates,
  this choice of $\alpha_i$ is unique, a fact which will be important later.
  We will take $v = \sum_{i=1}^{k+1} \alpha_i v_i$, an element of $C$ by convexity, and thus an element of $S$ as $f(v)=0$.

  Let $\hat f:\V\to\reals^\ell$ be linear with $v \in \hat S := C \cap \ker \hat f \subseteq S$.
  Let $\beta_1,\ldots,\beta_{k+1}\in\reals$, $\sum_{i=1}^{k+1} \beta_i = 0$, such that $\sum_{i=1}^{k+1} \beta_i \hat f(v_i) = 0$.
  We will show that the $\beta_i$ must be identically zero, i.e. that $\{\hat f(v_i):i\in\{1,\ldots,k+1\}\}$ are affinely independent.
  By construction, $v' := \sum_{i=1}^{k+1} \beta_iv_i \in \ker \hat f$, and as $v\in\ker\hat f$, for all $\lambda > 0$ we have $v_\lambda := v + \lambda v' = \sum_{i=1}^{k+1} (\alpha_i + \lambda \beta_i) v_i \in \ker \hat f$.
  Taking $\lambda$ sufficiently small, we have $\gamma_i := \alpha_i + \lambda \beta_i > 0$ for all $i$, and $\sum_{i=1}^{k+1} \gamma_i = \sum_{i=1}^{k+1} \alpha_i + \lambda\sum_{i=1}^{k+1} \beta_i = 1$.
  By convexity of $C$, we have $v_\lambda \in C$.
  Now $v_\lambda \in C \cap \ker \hat f \subseteq S = C \cap \ker f$, and in particular $v_\lambda \in \ker f$.
  Thus, $f(v_\lambda) = \sum_{i=1}^{k+1} \gamma_i f(v_i) = 0$.
  By the uniqueness of barycentric coordinates, for all $i\in\{1,\ldots,k+1\}$, we must have $\gamma_i = \alpha_i$ and thus $\beta_i = 0$, as desired.

  As $\hat f(C)$ contains $k+1$ affinely independent points, we have $\ell \geq \dim \im \hat f \geq k$, completing the proof.

  We make one final observation for the case $\ell=k$.
  By affine independence, the set $\conv\{\hat f(v_i) : i\in\{1,\ldots,k+1\}\}$ has dimension $k$ in $\reals^k$.
  As $0 = \hat f(v) = \sum_{i=1}^{k+1} \alpha_i \hat f(v_i)$, and $\alpha_i > 0$ for all $i$, we conclude $0\in\interior\conv\{\hat f(v_i) : i\in\{1,\ldots,k+1\}\} \subseteq \interior \hat f(C)$.
\end{proof}

\subsection{Expectations and Quantiles}
\label{sec:expect-quant}

\begin{proof}[of Lemma~\ref{lem:elic-complex-means}]
  Let $\ell = \affdim(\Gamma(\P))$, and let $r_0 \in \relint\, \Gamma(\P)$.
  Then $\V = \spn \{ \Gamma(p) - r_0 : p\in\P \} \subseteq \reals^k$ is a vector space of dimension $\ell$.
  Let $M = [v_1 \ldots v_\ell] \in \reals^{k\times \ell}$ where $v_1,\ldots,v_\ell\in\reals^k$ is a basis of $\V$.
  Now define $V:\Gamma(\P)\times\Y\to\reals^\ell$ by $V(r,y) = M^+(\phi(y) - r)$, where $M^+$ is the Moore--Penrose pseudoinverse of $M$.
  Clearly $\E_p[\phi(Y)]=r \implies V(r,p)=0$, and as $\E_p[\phi(Y)]-r\in\im M$, we have $M^+(\E_p[\phi(Y)]-r)=0 \implies \E_p[\phi(Y)]-r = 0$ by properties of the pseudoinverse $M^+$.
  Thus $\iden(\Gamma)\leq \ell$.
  Moreover, as $r_0\in\relint\,\Gamma(\P)$, we have $M^+ r_0 \in \int \{M^+r : r\in\Gamma(\P)\}$, and thus $0 \in \int \{V(r,p) : r\in\Gamma(\P)\}$, satisfying Condition~\ref{cond:v-interior}.
  Lemma~\ref{lem:main-iden-lower-bound} now gives $\iden(\Gamma) = \ell$.
  Elicitability follows by letting $\Gamma'(p) = M^+ (\E_p[\phi(Y)]-r_0) = \E_p[M^+(\phi(Y)-r_0)] \in \reals^\ell$ with link $f(r') = M r' + r_0$; $\Gamma'$ is of course elicitable as a linear property.
\end{proof}

\begin{proof}[of Lemma~\ref{lem:elic-complex-quantiles}]
  The function $V(r,y)_i = \ones\{y\leq r_i\} - \alpha_i$ identifies $\Gamma$, as we have $\E_F V(r,Y) = 0 \iff \forall i\; F(r_i)=\alpha_i \iff \forall i\; r_i = q_{\alpha_i}(F)$.
  Thus, as quantiles are elicitable, $\elici(\Gamma) \leq k$.
  As Condition~\ref{cond:quantiles} implies Condition~\ref{cond:v-interior} for this $V$, the lower bound follows immediately from Lemma~\ref{lem:main-iden-lower-bound}.
\end{proof}

\subsection{Convex Functions of Means, for Proposition~\ref{prop:elic-complex-classes}}
\label{sec:conv-funct-means}

Consider a property of the form $\gamma(p) = G(\E_p[\phi(Y)])$ for some strictly convex function $G:\reals^k\to\reals$ and $\P$-integrable $\phi:\Y\to\reals^k$.
To avoid degeneracies, we assume the set $\{\E_p[\phi(Y)]:p\in\P\}$ has affine dimension $k$, which from Lemma~\ref{lem:elic-complex-means} ensures that the property $\Gamma:p\mapsto\E_p[\phi(Y)]$ has $\elic_\C(\Gamma)=k$ for all $\C$ satsifying $\Clin\subseteq\C\subseteq\I$.
Letting $\{dG_r\}_{r\in\reals^k}$ be a selection of subgradients of $G$, the loss $L(r,y) = -(G(r) + dG_r\cdot (\phi(y)-r))$ elicits~$\Gamma$, and moreover we have $\gamma(p)=-\lbar(p)$; see e.g.\ \citet{frongillo2015vector}.
One easily checks that $\lbar = (-G) \circ \Gamma$.
Theorem~\ref{thm:bayes-risk-lower-bound} now immediately gives $\elic_\C(\lbar) = \elic_\C(\Gamma) = k$ for all $\Clin\subseteq\C\subseteq\I$.
We summarize this discussion as follows.

\begin{corollary}
  \label{cor:func-mean}
  Let $\phi:\Y\to\reals^k$, $k\in\N$, be $\P$-integrable with $\affdim\{\E_p[\phi(Y)]:p\in\P\}=k$.
  Then for any strictly convex $G:\reals^k\to\reals$, the property $\gamma : p \mapsto G(\E_p[\phi(Y)])$ has $\elic_\C(\gamma)=k$ for all $\C$ satisfying $\Clin\subseteq\C\subseteq\I$.
\end{corollary}

\section{Omitted Material from Section~\ref{sec:elic-complex-bayes-risk}}

\subsection{Proof of Lemma~\ref{lem:elic-complex-bayes-concave}}
\label{sec:proof-lemma-bayes-concave}

\begin{lemma}[\citet{frongillo2014general}]
  \label{lem:elic-complex-convex-flat}
  Let $G:X\to\reals$ convex for some convex subset $X$ of a vector space $\V$, and let $d\in\partial G_x$ be a subgradient of $G$ at $x$.  Then for all $x'\in X$ we have
  \[ d\in\partial G_{x'} \iff G(x)-G(x') = d(x-x')~. \]
\end{lemma}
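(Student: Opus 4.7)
The plan is to unpack the definition of subgradient in both directions; this is an elementary convex-analysis fact and the proof is essentially two applications of the subgradient inequality.

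For the forward direction ($\Rightarrow$), I would assume $d\in\partial G_{x'}$ as well as the hypothesis $d\in\partial G_x$. Writing out both subgradient inequalities—$G(x)\geq G(x') + d(x-x')$ from the subgradient at $x'$, and $G(x')\geq G(x) + d(x'-x)$ from the subgradient at $x$, using linearity of $d$ to flip the sign—and rearranging, the first gives $G(x)-G(x')\geq d(x-x')$ and the second gives $G(x)-G(x')\leq d(x-x')$, squeezing out the claimed equality.

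For the reverse direction ($\Leftarrow$), I would assume the equality $G(x)-G(x') = d(x-x')$ and verify the subgradient inequality at $x'$ directly. Pick any $y\in X$; from $d\in\partial G_x$ we have $G(y)\geq G(x) + d(y-x)$. Using linearity of $d$, write $d(y-x) = d(y-x') - d(x-x')$, and substitute the equality hypothesis to replace $d(x-x')$ by $G(x)-G(x')$. This yields
\[
G(y)\;\geq\; G(x) + d(y-x') - \bigl(G(x)-G(x')\bigr) \;=\; G(x') + d(y-x'),
\]
which is exactly the subgradient inequality witnessing $d\in\partial G_{x'}$.

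There is no real obstacle here; the only thing to be mindful of is that $d$ is a linear functional on $\V$ (so signs and additivity behave correctly) and that both $x$ and $x'$ lie in the convex domain $X$ so that the subgradient inequalities apply to every $y\in X$ appearing in the argument.
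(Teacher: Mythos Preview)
Your proof is correct: both directions are clean applications of the subgradient inequality, and the linearity manipulations are valid. The paper does not supply its own proof of this lemma; it simply cites it from \cite{frongillo2014general} and uses it as a black box in the proof of Lemma~\ref{lem:elic-complex-convex-kink}, so there is nothing further to compare against.
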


\begin{lemma}
  \label{lem:elic-complex-convex-kink}
  Let $G:X\to\reals$ convex for some convex subset $X$ of a vector space $\V$.
  Let $x,x'\in X$ and $x_\lambda = \lambda x + (1-\lambda) x'$ for some $\lambda \in (0,1)$.
  If there exists some $d\in\partial G_{x_\lambda} \setminus (\partial G_x\cup \partial G_{x'})$, then $G(x_\lambda) < \lambda G(x) + (1-\lambda) G(x')$.
\end{lemma}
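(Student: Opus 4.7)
The plan is to prove the contrapositive by a short convexity-plus-subgradient argument, leaning directly on Lemma~\ref{lem:elic-complex-convex-flat}. Since $G$ is convex on $X$ we always have $G(x_\lambda)\leq \lambda G(x)+(1-\lambda)G(x')$, so it suffices to rule out equality: I will assume $G(x_\lambda) = \lambda G(x) + (1-\lambda) G(x')$ and derive $d\in\partial G_x\cap \partial G_{x'}$, contradicting the hypothesis.

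Concretely, first I would unfold the subgradient inequality at $x_\lambda$ with $d\in\partial G_{x_\lambda}$, evaluated at the two endpoints:
\begin{align*}
G(x) &\geq G(x_\lambda) + d(x-x_\lambda) = G(x_\lambda) + (1-\lambda)\,d(x-x'),\\
G(x') &\geq G(x_\lambda) + d(x'-x_\lambda) = G(x_\lambda) - \lambda\,d(x-x').
\end{align*}
Taking the convex combination with weights $\lambda$ and $1-\lambda$, the linear terms in $d$ cancel and we recover $\lambda G(x)+(1-\lambda)G(x')\geq G(x_\lambda)$. Assuming equality forces \emph{both} of the above inequalities to be equalities simultaneously, i.e.\ $G(x)-G(x_\lambda) = d(x-x_\lambda)$ and $G(x')-G(x_\lambda) = d(x'-x_\lambda)$.

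At this point Lemma~\ref{lem:elic-complex-convex-flat} applies in its ``$\Leftarrow$'' direction with the base point $x_\lambda$ and the subgradient $d\in\partial G_{x_\lambda}$: the equation $G(x)-G(x_\lambda) = d(x-x_\lambda)$ implies $d\in\partial G_x$, and likewise $G(x')-G(x_\lambda) = d(x'-x_\lambda)$ implies $d\in\partial G_{x'}$. This contradicts the assumed $d\notin \partial G_x\cup\partial G_{x'}$, completing the proof.

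I do not expect any real obstacle here; the only things to be careful about are (i) writing $x-x_\lambda = (1-\lambda)(x-x')$ and $x'-x_\lambda = -\lambda(x-x')$ correctly so that the linear terms cancel in the convex combination, and (ii) invoking Lemma~\ref{lem:elic-complex-convex-flat} with $x_\lambda$ as the base point where $d$ is known to be a subgradient, rather than at $x$ or $x'$ (which is exactly what we are trying to establish). No further machinery is needed and the argument fits in a few lines.
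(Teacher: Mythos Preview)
Your proof is correct and is essentially the same as the paper's: both apply the subgradient inequality for $d$ at $x_\lambda$ to the endpoints $x,x'$, take the $(\lambda,1-\lambda)$ combination so the linear terms cancel, and invoke Lemma~\ref{lem:elic-complex-convex-flat} to handle strictness. The only difference is cosmetic---you argue by contrapositive (equality forces both subgradient inequalities to be equalities, hence $d\in\partial G_x\cap\partial G_{x'}$), while the paper argues directly (since $d\notin\partial G_x$ and $d\notin\partial G_{x'}$, both inequalities are strict, hence so is the combination).
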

\begin{proof}
  By the subgradient inequality for $d$ at $x_\lambda$ we have $G(x)-G(x_\lambda) \ge d(x-x_\lambda)$, and furthermore Lemma~\ref{lem:elic-complex-convex-flat} gives us $G(x)-G(x_\lambda) > d(x-x_\lambda)$ since otherwise we would have $d\in\partial G_x$.  Similarly for $x'$, we have $G(x')-G(x_\lambda) > d(x'-x_\lambda)$.  

Adding $\lambda$ of the first inequality to $(1-\lambda)$ of the second gives
\begin{align*}
  \lambda G(x) + (1-\lambda) G(x') - G(x_\lambda) &> \lambda d(x-x_\lambda) + (1-\lambda) d(x'-x_\lambda) \\
 &= \lambda (1-\lambda) d(x-x') + (1-\lambda) \lambda d(x'-x) 
= 0~,
\end{align*}
where we used linearity of $d$ and the identity $x_\lambda = x' + \lambda (x-x')$.
\end{proof}

Lemma~\ref{lem:elic-complex-bayes-concave} follows from the following result.

\begin{lemma}
  \label{lem:elic-complex-bayes-concave-multivalued}
  Suppose loss $L$ with Bayes risk $\lbar$ elicits $\Gamma:\P\to 2^\R$.  Then for any $p,p'\in\P$ with $\Gamma(p)\cap\Gamma(p')=\emptyset$, we have $\lbar(\lambda p + (1-\lambda) p') > \lambda \lbar(p) + (1-\lambda) \lbar(p')$ for all $\lambda\in(0,1)$.
\end{lemma}
\begin{proof}
  Let $G = -\lbar$, which is the expected score function for the positively-oriented scoring rule $S = -L$.
  By Theorem 2 of \citet{frongillo2014general}, we have some subset $\D\subseteq \partial G$ of subgradients of $G$, and bijection $\varphi:\Gamma(\P)\to\D$, such that $\Gamma(p) = \varphi^{-1}(\D\cap\partial G_p)$.
  In other words, $\Gamma$ is a relabeling of a selection of subgradients of $G$: there is a subgradient $d_r = \varphi(r)$ associated to each report value $r\in\Gamma(\P)$, and $d_r \in \partial G_p \iff r \in \Gamma(p)$.

  Observe that for any distributions $q,q'\in\P$, if $\Gamma(q)\cap\Gamma(q')=\emptyset$, then for any $r\in\Gamma(q)$ and $d_r = \varphi(r)$, we have $d_r \in \partial G_q \setminus \partial G_{q'}$.
  Otherwise, since $d_r \in \D\cap\partial G_q$ by definition, we would have $d_r \in \D\cap\partial G_{q'}$ as well, and thus $r = \varphi^{-1}(d_r) \in \varphi^{-1}(\D\cap\partial G_{q'}) = \Gamma(q')$, a contradiction.

  Assume first that $\Gamma(p_\lambda)$, $\Gamma(p)$, and $\Gamma(p')$ are all disjoint sets.
  By the above observation, taking any $d\in\varphi(\Gamma(p_\lambda))$, we have $d\in\partial G_{p_\lambda}$ but $d\notin \partial G_p\cap\partial G_{p'}$.
  The conclusion then follows by Lemma~\ref{lem:elic-complex-convex-kink}.

  Otherwise, we have $r \in \Gamma(p_\lambda) \cap \Gamma(p)$ without loss of generality, and letting $d_r = \varphi(r)$, we have $d_r \in \partial G_{p_\lambda} \cap \partial G_p$ by definition of $\varphi$.
  Now assume for a contradiction that $G(p_\lambda) = \lambda G(p) + (1-\lambda) G(p')$.
  By Lemma~\ref{lem:elic-complex-convex-flat} for $d_r$ we have $G(p) - G(p_\lambda) = d_r(p-p_\lambda) = \tfrac{(1-\lambda)}{\lambda} d_r(p_\lambda-p')$.
  Solving for $G(p)$ and substituting into the previous equation gives $(1-\lambda)$ times the equation $G(p_\lambda) = d_r(p_\lambda-p') + G(p')$, and applying Lemma~\ref{lem:elic-complex-convex-flat} one more gives $d_r \in \partial G_{p'}$.
  We now have a contradiction to the observation above, as we have assumed $\Gamma(p)\cap\Gamma(p')=\emptyset$.
\end{proof}

Lemma~\ref{lem:elic-complex-bayes-concave} now follows immediately; given $\Gamma:\P\to\R$ from Lemma~\ref{lem:elic-complex-bayes-concave}, we simply apply Lemma~\ref{lem:elic-complex-bayes-concave-multivalued} to the property $\Gamma':\P\to 2^\R$ given by $\Gamma'(p) = \{\Gamma(p)\}$.

As remarked in \S~\ref{sec:extens-non-conv}, the restriction that $\P$ be convex is not crucial to our results.
For non-convex $\P$, one would extend the Bayes risk $\lbar$ to the convex hull $\conv\,\P$ of $\P$, by writing $\lbar(p) = \arginf_{r\in\R} L(r,p)$, where of course $\R = \Gamma(\P)$.
One can then extend $\Gamma$ by adding new reports, suggested by Theorem 2 of \citet{frongillo2014general}, so that $\Gamma$ is non-redundant and nonempty on $\conv\,\P$, but coincides with its previous definition on $\P$.
Lemma~\ref{lem:elic-complex-bayes-concave-multivalued} then follows as usual, and since $\lbar$ and $\P$ are unchanged on $\P$, the result holds that $\lbar(\lambda p + (1-\lambda) p') > \lambda \lbar(p) + (1-\lambda) \lbar(p')$ for all $\lambda\in(0,1)$ such that $\lambda p + (1-\lambda) p'\in\P$.

\subsection{Proof of Corollary \ref{cor:bayes-risk-ident-lower-bound}}

We will make use of Lemma~\ref{lem:lin-alg-nested-level-sets} together with the following.

\begin{lemma}\label{lem:lin-alg-span}
  Let $\V$ be a real vector space.
  Let $f:\V\to\reals^k$ be linear, $C\subseteq \V$ convex with $\spn C = \V$, and let $S = C \cap \ker f$.
  If $0 \in \interior f(C)$ then $\spn S = \ker f$.
\end{lemma}

\begin{proof}
  As $\ker f$ is a subspace and $S \subseteq \ker f$, we have that $\spn S$ is a subspace of $\ker f$.
  Applying the universal property of quotient spaces, we have linear maps $\pi : \V \to \V / \spn S$ and $g : \V / \spn S \to \reals^k$ such that $f = g \circ \pi$.
  By assumption, $\{0\} = \pi(S) = \pi(\ker f \cap C) = \pi(\ker f) \cap \pi(C) = \ker g \cap \pi(C)$.
  We will show the stronger statement that $\ker g = \{0\}$, which implies $\ker f = \ker \pi = \spn S$.

  As $\spn C = \V$, we have $\spn \pi(C) = \V / \spn S$.
  Thus, for any $x \in \V / \spn S$ we may write $x = \sum_{i=1}^m \alpha_i x_i$ for $\alpha_i \in \reals$ and $x_i \in \pi(C)$.
  As $0\in\interior f(C) = \interior g(\pi(C))$, there is an open ball $B$ with $0\in B \subseteq g(\pi(C))$.
  For each $i\in\{1,\ldots,m\}$, the line containing $g(x_i)$ and $0$ intersects $B$.
  In particular, for some sufficiently small $\epsilon>0$, for each $i\in\{1,\ldots,m\}$ we have some $y_i\in\pi(C)$ with $g(y_i) = -\epsilon g(x_i)$.
  By linearity, $g(\tfrac 1 {1+\epsilon} (y_i + \epsilon x_i)) = 0$, and from convexity of $\pi(C)$ we also have $\tfrac 1 {1+\epsilon} (y_i + \epsilon x_i) \in \pi(C)$.
  From the observation $\ker g \cap \pi(C) = \{0\}$ we now have $y_i = -\epsilon x_i$.

  Define $\beta_i =
  \begin{cases}
    \alpha_i & \alpha_i \geq 0 \\
    -\alpha_i/\epsilon & \alpha_i < 0
  \end{cases} \geq 0$
  and
  $w_i =
  \begin{cases}
    x_i & \alpha_i \geq 0 \\
    y_i & \alpha_i < 0
  \end{cases}$
  for $i\in\{1,\ldots,m\}$, and set $\beta_{m+1} = 1$, $w_{m+1} = 0 \in \pi(C)$.
  Let $\beta = \sum_{i=1}^{m+1} \beta_i \geq 1$.
  For all $i\in\{1,\ldots,m+1\}$, as $w_i,0 \in \pi(C)$, we have $w_i/\beta \in \pi(C)$ by convexity.
  Thus,
  \begin{align*}
    \sum_{i=1}^{m+1} (\beta_i/\beta) w_i
    &= \frac 1 \beta \left( \sum_{i\in\{1,\ldots,m\}:\alpha_i \geq 0} \alpha_i x_i + \sum_{i\in\{1,\ldots,m\}:\alpha_i < 0} -\alpha_i \frac {y_i} \epsilon + 1 \cdot 0 \right)
    \\
    &= \frac 1 \beta \sum_{i=1}^m \alpha_i x_i = \frac 1 \beta x~,
  \end{align*}
  and by convexity, we conclude $x/\beta \in \pi(C)$.
  Finally, if $g(x) = 0$, then $g(x/\beta) = 0$, but as $x/\beta \in \pi(C)$, we must have $x/\beta = 0$, whence $x=0$.
  As $x\in\V/\spn S$ was arbitrary, we conclude $\ker g = \{0\}$.
\end{proof}

  \begin{proof}[of Corollary~\ref{cor:bayes-risk-ident-lower-bound}]

  For the upper bound, $\Gamma\in\ID(\P)$ implies $(\lbar,\Gamma)\in\ID(\P)$, as if $V(a,y)$ identifies $\Gamma$, then $V'((r,a),y) = \bigl(L(a,y) - r,\, V(a,y)\bigr)$ identifies $(\lbar,\Gamma)$.
  Corollary~\ref{cor:elic-complex-bayes-risk} then gives $\elici(\lbar) \leq k+1$.
  For the lower bounds,
  Theorem~\ref{thm:bayes-risk-lower-bound} gives $\elici(\lbar) \geq k$ with equality if $\lbar$ is a link of $\Gamma$.

  For the stronger lower bound of $k+1$, let $V$ and $r$ be the identification function and report from Condition~\ref{cond:v-interior}, and assume $\lbar$ is non-constant on $\Gamma_r$.
  Given $\hat\Gamma:\P\to\reals^\ell$ and $g$ from Theorem~\ref{thm:bayes-risk-lower-bound}, so that $\hat\Gamma$ is elicitable and identifiable and $\lbar = g \circ \hat\Gamma$, we wish to show $\ell \geq k+1$.
  By the proof of Theorem~\ref{thm:bayes-risk-lower-bound}, $\hat\Gamma$ refines $\Gamma$, and moreover $\ell\geq k$.

  Now suppose $\ell = k$ for a contradiction.
  By the proof of Lemma~\ref{lem:lin-alg-nested-level-sets}, there is a distribution $p\in\Gamma_r$ such that if $p \in \hat \Gamma_{\hat r} \subseteq \Gamma_r$, which is guaranteed by refinement, then $0\in\interior\{\hat V(\hat r,p) : p\in\P\}$.
  Applying Lemma~\ref{lem:lin-alg-span} to the function $f:\spn \P \to \reals^k$, $q \mapsto V(r,q)$, we have $\spn \ker f = \spn \Gamma_r$.
  Applying Lemma~\ref{lem:lin-alg-span} again to $\hat f:\spn \P \to \reals^k$, $q \mapsto \hat V(\hat r,q)$, we have $\spn \ker \hat f = \spn \hat\Gamma_{\hat r}$.
  As $\hat\Gamma_{\hat r} \subseteq \Gamma_r$, we have $\ker \hat f = \spn \hat\Gamma_{\hat r} \subseteq \spn \Gamma_r = \ker f$.
  By the first isomorphism theorem, we also have $\codim \ker \hat f = \codim \ker f = k$, as the images of these linear maps span all of $\reals^k$.
  By the third isomorphism theorem we conclude $\Gamma_r = \hat\Gamma_{\hat r}$.
  Since by assumption $\lbar$ is non-constant on $\Gamma_r = \hat\Gamma_{\hat r}$, we have distributions $p,p' \in\hat\Gamma_{\hat r}$ with $\lbar(p)\neq\lbar(p')$, which contradicts $\lbar$ being a link of $\hat\Gamma$: $\lbar(p) = g(\hat\Gamma(p)) = g(\hat r) = g(\hat\Gamma(p')) = \lbar(p')$.
\end{proof}

\subsection{Bounds for $\Cstrict$ and $\Cstrong$}
\label{sec:omitted-proofs-other-classes}

We now give the full details of our upper and lower bounds for strictly and strongly convex losses.
Examining the form $L^*((r,a),y) = H(r) + h(r)(L(a,y)-r)$ from eq.~\eqref{eq:elic-bayes-risk-cor}, which established the main upper bound, we see that as long as $h$ does not decrease ``too quickly'' relative to the curvature of $L$, the loss $L^*((r,a),y)$ is still strictly convex in $(r,a)$.
\begin{proposition}
  \label{prop:cvx-elic}
  Let $\Gamma \in \Cstrict$, $\Gamma:\P\to\reals^k$, $k\in\N$, be elicited by a twice-differentiable, strictly convex, bounded loss function $L$.
  If $\Gamma$ satisfies Condition~\ref{cond:v-interior} for some $r\in\Gamma(\P)$, and $\lbar$ is non-constant on $\Gamma_r$, and there exists $\alpha>0$ with
  \begin{equation}
    \label{eq:cvx-elic-cond}
    \forall y\in\Y,\;\;
    \alpha \nabla_a^2 L(\cdot,y) \succ \nabla_a L(\cdot,y)\nabla_a L(\cdot,y)^\tr~,
  \end{equation}
  then $\elic_\Cstrict(\lbar) = k+1$.
\end{proposition}

\begin{proof}
  For the lower bound,  the conditions of the Proposition allow us to apply, Corollary~\ref{cor:bayes-risk-ident-lower-bound}, which gives us $\elici(\lbar) = k+1$.
  By Proposition~\ref{prop:nested-convex-classes-long}, we conclude $\elic_{\Cstrict}(\lbar) \geq k+1$.
  
  For the upper bound, let $L\in[0,B]$ without loss of generality, so that $\lbar\in[0,B]$.
  The pair $(\lbar,\Gamma)$ is bounded.
  Take $h(r) = \alpha + B - r$, the $L^*((r,a),y)$ we obtain from Corollary~\ref{cor:elic-complex-bayes-risk}, eq.~\eqref{eq:elic-bayes-risk}, is given by
  \begin{equation}
    \label{eq:cvx-elic-lstar}
    L^*((r,a),y) = \frac {r^2} 2 + (\alpha + B - r) L(a,y)~.
  \end{equation}
  As $L$ is twice differentiable, we may verify the strict convexity of $L^*$ by checking that its Hessian is positive definite,
  \begin{equation}
    \label{eq:2}
    \nabla_{(r,a)}^2 L^*(\cdot,y) =
    \begin{bmatrix}
      1 & -\nabla_a L(\cdot,y) \\
      -\nabla_a L(\cdot,y) & (\alpha + B - r)\nabla_a^2 L(\cdot,y)
    \end{bmatrix}~.
  \end{equation}
  By the Schur complement theorem, $\nabla_{(r,a)}^2 L^*(\cdot,y)$ is positive definite if any only if
  \begin{equation}
    \label{eq:1}
    (\alpha + B - r)\nabla_a^2 L(\cdot,y) - (-\nabla_a L(\cdot,y))(1)^{-1}(-\nabla_a L(\cdot,y))^\tr \succ 0~,
  \end{equation}
  which is implied by the condition~\eqref{eq:cvx-elic-cond} as $B - r \geq 0$ and thus $(B-r)\nabla_a^2 L(\cdot,y)\succeq 0$.
  Moreover, Lipschitz continuity and differentiability of $L$ implies the same of $L^*$.
  We have now shown $(\lbar,\Gamma) \in \Cstrict$, giving the result.
\end{proof}

Intuitively, Proposition~\ref{prop:cvx-elic} tells us that as long as $L$ is ``convex enough'', its curvature is sufficient to offset the decreasing effect of the $h(r)$ coefficient in eq.~\eqref{eq:loss-elic-minumum} and~\eqref{eq:elic-bayes-risk-cor}.
Naturally, then, this result gives a bound for strongly convex losses $L$ as well.
The Hessian of a strongly convex $L$ satisfies $\nabla_a^2 L(\cdot,y) \succeq \mu I$ for some $\mu>0$.
Thus, letting $\lambda$ be the supremum of largest eigenvalue of $\nabla_a L(\cdot,y)\nabla_a L(\cdot,y)^\tr$ over all $a$, which is finite by boundedness of $L$ and compactness of the range of $\Gamma$, we can simply take $\alpha = 2\lambda/\mu$ and proceed as in Proposition~\ref{prop:cvx-elic}.
We instead use a different proof technique, which allows us to lift the twice-differentiability assumption as well.

\begin{proof}[of Proposition~\ref{prop:str-cvx-elic}]
  As in Proposition~\ref{prop:cvx-elic}, our conditions together with Corollary~\ref{cor:bayes-risk-ident-lower-bound} and Proposition~\ref{prop:nested-convex-classes-long} give the lower bound.
  For the upper bound, fix the outcome $y\in\Y$ and let $F(a) := L(a,y)$.
  We have by assumption that $L$, and thus $F$, is $\mu$-strongly convex for some $\mu>0$.
  Taking $L^*$ in eq.~\eqref{eq:cvx-elic-lstar}, and letting $C=\alpha + B$, we have
  \begin{align*}
    &L^*((r,a),y) - L^*((s,b),y) - \nabla_{(s,b)} L^*((s,b),y)
    \\
    &= \tfrac 1 2 r^2 + (C-r) F(a) - \tfrac 1 2 s^2 - (C-s) F(b) - \bigl((s-F(b))(r-s)
    + (C-s)\nabla F(b) \cdot (a-b)\bigr)
    \\
    &= \tfrac 1 2 (r-s)^2 + (C-r) F(a) - (C-s) F(b) + F(b)(r-s)
    - (C-s)\nabla F(b) \cdot (a-b)
    \\
    &= \tfrac 1 2 (r-s)^2 + (C-r) (F(a) - F(b)) - (C-s)\nabla F(b) \cdot (a-b)
    \\
    &\geq \tfrac 1 2 (r-s)^2 + (C-r) \tfrac \mu 2 \|a-b\|^2 + (s-r)\nabla F(b) \cdot (a-b)
    \\
    &\geq \tfrac 1 2 (r-s)^2 + (C-B) \tfrac \mu 2 \|a-b\|^2 - |s-r| \|\nabla F(b)\| \|a-b\|~.
  \end{align*}
  Let $\nabla_{\max} = \sup_{a\in \Gamma(\P),y\in\Y} \|\nabla L(\cdot,y)\|$ be the largest gradient magnitude of $L$, which is finite by boundedness of $L$ and compactness of the range of $\Gamma$.
  Letting $C = (8\nabla_{\max}^2 + \tfrac 1 2)/\mu + B$, we have
  \begin{align*}
    &L^*((r,a),y) - L^*((s,b),y) - \nabla_{(s,b)} L^*((s,b),y)
    \\
    &\geq \tfrac 1 2 (r-s)^2 + (4\nabla_{\max}^2 + \tfrac 1 4) \|a-b\|^2 - \nabla_{\max} |s-r| \|a-b\|
    \\
    &= \tfrac 1 4 (r-s)^2 + \tfrac 1 4 \|a-b\|^2 + \left(\tfrac 1 2 |r-s| - 2 \nabla_{\max} \|a-b\|\right)^2~,
  \end{align*}
  which as the third term is nonnegative, shows $L^*$ to be $\tfrac 1 2$-strongly convex.
\end{proof}

\section{Additional Discussion}
\label{sec:additional}

\subsection{Comparison to Other Definitions in the Literature}
\label{sec:comp-other-defin}

The literature has seen several variations on the definition of elicitation complexity, which fall into three broad categories:
(1) different choices of the class $\C$,
(2) variations on the type of loss function allowed,
and (3) different requirements on the link function $f$. 
For the most part, all of these restrictions can be cast as restrictions on $\C$; for example, we imposed restrictions on the loss for our classes $\Cstrict$ and $\Cstrong$ by restricting to properties elicited by such a loss.

\paragraph{1. Class of properties $\C$}
Choices of $\C$ in the literature include continuous properties \citep{steinwart2014elicitation},  linear properties or expectations \citep{agarwal2015consistent}, and properties whose components are themselves elicitable \citep{lambert2008eliciting} meaning every $\hat\Gamma\in\C$, $\hat\Gamma:\P\to\reals^k$, should have $(\hat\Gamma)_i$ be elicitable for $i=1,\ldots,k$.
Our classes $\I$ and $\Clin$ readily fall into this category.

\paragraph{2. Varying the loss function}
Some classes of properties are naturally defined by restrictions on the loss function.
For example, for efficient optimization in empirical risk minimization, it is common to restrict to convex losses; elicitation complexity with respect to the class of convex-elicitable properties, $\Ccvx$, is closely related to the notion of \myemph{convex calibration dimension} \citep{ramaswamy2016convex}.
The classes $\Cstrict$ and $\Cstrong$ impose further restrictions on the loss so that the resulting properties are identifiable.
The restriction of Lambert et al.~\citeyearpar{lambert2008eliciting} that properties have elicitable components could also be cast as the restriction that the loss function $L$ be \myemph{separable}, meaning $L(r,y) = \sum_{i=1}^k L(r_i,y)$ where $r\in\reals^k$.
Another recent variation is a multi-observation loss function, which is allowed multiple independent realizations of the random variable $Y$, that is, takes the form $L(r,y_1,\ldots,y_m)$ \citep{casalaina-martin2017multi,frongillo2019multi}.
Multi-observation losses can reduce elicitation complexity, sometimes dramatically: the variance and 2-norm both have complexity 1 with respect to $m$-observation losses even for $m = 2$, compared to complexities 2 and $|\Y|-1$ for the usual setting of $m=1$ (\S~\ref{sec:ex-cvx-ftns-of-means}).

\paragraph{3. Restricting the link function}
Fissler and Ziegel~\citeyearpar{fissler2016higher} propose a definition of complexity as the smallest $k$ such that $\Gamma$ is a component of a $k$-dimensional elicitable property.
Equivalenttly, the link function must take the simple form $f(r) = r_1$, the first component of $r\in\reals^k$.
It would also be natural to restrict to the broader class of continuous, or differentiable, link functions.

In general, we believe that the notion of elicitation complexity put forth in this paper, Definition~\ref{def:elic-complex}, is best suited to studying the difficulty of eliciting properties:
viewing $f$ as a potentially dimension-reducing link function, our definition captures the minimum number of dimensions needed in a point estimation or empirical risk minimization for the property in question, followed by a simple one-time application of $f$.
To further justify this claim, we observe that our definition yields weakly lower complexity than most of the other common definitions, and strictly lower for some natural properties.
For example, consider the first-component-link definition of Fissler and Ziegel~\citeyearpar{fissler2016higher} for the squared mean $\Gamma(p) = \E_p[Y]^2$ when $\Y=\reals$.
As we saw in Remark~\ref{remark:mean-squared}, this property has $\elic_\Clin(\Gamma)=1$, yet as it is not directly elicitable, it has complexity 2 under the Fissler--Ziegel definition.
This complexity is achieved, for example, via the property $\hat\Gamma(p) = (\E_p[Y]^2,\E_p[Y])$, elicited by $L(r,y) = (r_2-y)^2 + \ones\{r_1\neq r_2^2\}$.
Moreover, in light of Remark~\ref{remark:everything-1-elic}, without further restrictions on the loss, link, or class of properties, essentially all properties have complexity 2 under their definition.
Finally, under the component-wise-elicitable $\C$ of Lambert et al.~\citeyearpar{lambert2008eliciting}, the property $\Gamma(p) = \max_{y\in\Y} p(\{y\})$ for finite $\Y$ has complexity $\elic_\C(\Gamma) = |\Y|-1$, whereas we show in \S~\ref{sec:ex-modal-mass} that $\elicifin(\Gamma) = 2$, where $\I^\fin$ is a slight generalization of identifiability to allow for finite properties, i.e., $\Gamma:\P\to\R$ for a finite set $\R$.

\subsection{Elicitation Versus Elicitation Complexity}
\label{sec:elic-vers-elic}

In Remark~\ref{remark:mean-squared}, we gave the example of the squared mean to illustrate the fact that non-elicitable properties can still have elicitation complexity 1.
Here we discuss the converse: elicitability does not necessarily imply elicitation complexity 1.
In fact, recent results on the elicitation complexity of the mode and modal interval give a natural example of a property which is clearly elicitable but whose elicitation complexity with respect to identifiable properties is infinite.

While the mode of a distribution is elicitable when $\Y$ is a finite set, e.g.\ via 0-1 loss, it was recently shown that, for sufficiently rich choices of distributions $\P$, the mode is not elicitable over $\Y=\reals$ \citep{heinrich2013mode}.
In this real-valued setting, the mode is defined for e.g.\ continuous densities as the $\argmax$ of the density value.
To circumvent this impossibility, in practice one may attempt to ``approximate'' the mode as follows.
Given a parameter $\beta$, we define the midpoint of a modal interval as the property $\mi_\beta(p) = \argmax_{a\in\reals} p([a-\beta,a+\beta])$, namely, the midpoint of the interval of width $2\beta$ with the maximum probability mass.
The modal interval is directly elicitable, via the loss $L(r,y) = \ones\{|r-y|>\beta\}$.

Interestingly, recent work shows that the mode and modal interval both have infinite elicitation complexity with respect to $\I$, despite the latter being directly elicitable \citep{dearborn2019indirect}.
In other words, while $\mi_\beta$ is directly elicitable, we have $\elici(\mode) = \elici(\mi_\beta) = \infty$.
We may also define the modal mass of width $\beta$ as the mass of the modal interval, $\gamma_\beta(p) = \max_{a\in\reals} p([a-\beta,a+\beta])$, which satisfies $\gamma_\beta(p) = 1 - \lbar(p)$; a modification of Theorem~\ref{thm:bayes-risk-lower-bound} allowing for non-convex $\P$ (cf.\ \S~\ref{sec:extens-non-conv}) would thus conclude $\elici(\gamma_\beta) = \infty$ as well.
These lower bounds may appear to contradict results showing that real-valued properties are elicitable if and only if they are identifiable \citep{lambert2018elicitation,steinwart2014elicitation}, but such results require the property in question to be continuous, and $\mi_\beta$ is not.
Intuitively, these lower bounds suggest that the loss $L$ eliciting $\mi_\beta$ will be hard to optimize in practice.
More generally, some restriction $\C$ on the properties and losses in question may be sensible even in a direct elicitation setting.

\subsection{Mode and Modal Mass for Finite $\Y$}
\label{sec:ex-modal-mass}

In the case of finite $\Y$, with $\P$ taken to be the probability simplex, all distributions over~$\Y$, we define the modal mass $\gamma(p) = \max_{y\in\Y} p(\{y\})$ as the highest probability assigned to any outcome.
In other words, the modal mass is the probability assigned to the mode of $p$, defined by $\mode(p) = \argmax_{y\in\Y} p(\{y\})$,
which we note is set-valued in general.
The mode of $p$ is elicitable via 0-1 loss $L(r,y) = \ones\{r\neq y\}$.
Here $\ones$ denotes the indicator function.
The modal mass is not elicitable, however, as evidenced by its nonconvex level sets, and hence we turn to its elicitation complexity.

The form of the modal mass is reminscent of eq.~\eqref{eq:prop-minimum} from Theorem~\ref{thm:elic-minimum}, and indeed $-\gamma$ is the minimum expected value over $X_a(y) = -\ones\{y=a\}$.
More directly, we can see that $\gamma$ is $1$ minus the Bayes risk of 0-1 loss: $\gamma(p) = \max_{r\in\Y} \E_p \ones\{r = y\} = 1 - \min_{r\in\Y} \E_p \ones\{r \neq y\} = 1-\lbar(p)$.
Unfortunately, we cannot immediately apply Corollary~\ref{cor:bayes-risk-ident-lower-bound}, as the mode is not identifiable.
Indeed, no nonconstant finite property is identifiable; see e.g.\ \citet[Lemma 2.4]{fissler2019order}.

To work around this technical barrier while keeping with the spirit of our framework, we replace $\I$ with the class $\I^\fin$ of of properties which are identifiable after possibly conditioning on some elicitable finite property.
Nonconstant finite elicitable properties are necessarily set-valued, though unless they are redundant, only on a subset of the boundary of each level set $\Gamma'_{r'}$ \citep{lambert2009eliciting}.
Formally, let us define $\I_k'$ to be the class of properties $\Gamma = (\Gamma',\Gamma'')$ where $\Gamma' : \P \to 2^{\R'}$ is elicitable with $|\R'|<\infty$, and $\Gamma'' \in \I_{k-1}(\Gamma'_{r'})$ for all $r' \in \R'$, where $\Gamma'_{r'} = \{p\in\P : \Gamma'(p)=r'\}$ is a level set of $\Gamma'$.
For the case $k=1$, we take $\ID_0$ to be the set of constant properties.
That is, $\hat\Gamma$ is a product of an elicitable finite property, and a vector-valued, or real-valued, property which is identifiable conditioned on that finite property.
We then take $\I^\fin_k = \I'_k \cup \I_k$, and $\I^\fin = \bigcup_k \I^\fin_k$.
Any elicitable finite property $\Gamma$ is trivially in $\I^\fin_1$, by taking $\Gamma'=\Gamma$.

With this formalism in hand, we see that $\Gamma = (\mode,\gamma) \in \I^\fin_2$:
as observed above, $\mode(\cdot)$ is finite and elicitable, and for all $a\in\R'=\Y$ the function $V_a(r,y) = \ones\{y = a\} - r$ identifies $\gamma$ conditioned on $a \in \mode(\cdot)$, that is, when restricting to the distributions with mode $a$.
Theorem~\ref{thm:elic-minimum} now applies to give $\elicifin(\gamma)\leq 2$.
In fact, for $\P$ sufficiently rich, $\elicifin(\gamma)=2$ as $\gamma$ is not a link of a finite property.
As discussed in \S~\ref{sec:comp-other-defin}, this low complexity gives an interesting contrast to the component-wise-elicitable $\C$ of Lambert et al.~\citeyearpar{lambert2008eliciting}, where $\elic_\C(\gamma) = |\Y|-1$, the maximum possible complexity.

\subsection{Illustration of the Difference Between Joint Elicitability and Conditional Elicitability}

\begin{figure}[!h]
  \centering
  \includegraphics[width=0.4\textwidth]{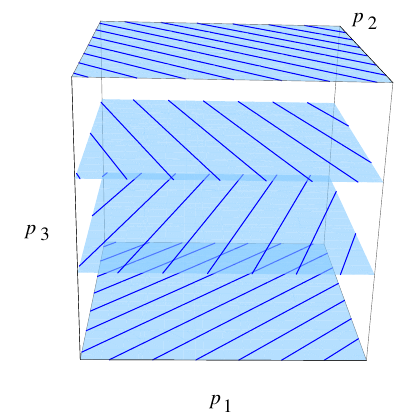}
  \includegraphics[width=0.4\textwidth]{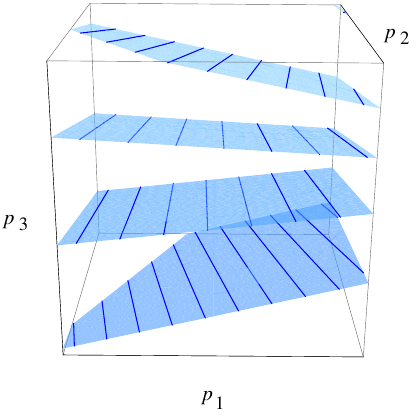}
  \caption{Depictions of the level sets of two properties on outcomes $\Y=\{1,2,3,4\}$, one elicitable and the other not, at least not by a twice differentiable loss function.
    \textbf{Left:} The property depicted is $\Gamma(p) = (p_3, p_1 + p_2 p_3)$, an example taken from \citet{frongillo2015vector} which is shown not to be elicitable by any twice differentiable loss function.
    \textbf{Right:} Let $\gamma(p)$ be implicitly defined as the solution $r$ to the equation $\tfrac 1 3 \sin(r) p_1 + \tfrac 1 4 \cos(r) p_2 + p_3 = r$.
    One can check that the loss $L(r,y) = 6 r^2 + 4 \cos(r) \ones\{y=1\} - 3 \sin(r) \ones\{y=2\} - 12 r \ones\{y=3\}$ elicits $\gamma$.
    The property depicted is $\Gamma = (\lbar,\gamma)$, which is elicitable by Theorem~\ref{thm:elic-minimum}.
    \\[4pt]
    Interestingly, both properties are conditionally elicitable, conditioned on $\Gamma'(p) = p_3 = \E_p[\ones\{Y\!=\!3\}]$, as illustrated by the planes: the height of the plane, the intersept $(p_3,0,0)$ for example, is elicitable as an expected value, and conditioned on this plane, the properties are both linear and thus links of expected values, which are also elicitable.}
  \label{fig:3d-examples}
\end{figure}

Figure~\ref{fig:3d-examples} gives an example of two properties that are both conditionally elicitable, but one is elicitable while the other is not.
It illustrates the subtlety of characterizing all elicitable vector-valued properties, perhaps the most fundamental open question in this literature.
Indeed, even nontrivial examples of vector-valued properties which were not simply a vector of real-valued elicitable properties, or a link of such a vector of properties, were sparse before Theorem~\ref{thm:elic-minimum}.
It may be that some crucial insight lies in the difference between the seemingly similar properties in Figure~\ref{fig:3d-examples}, of which one is elicitable and the other is not.
One interesting question toward this general characterization is the following: do there exist elicitable properties which are not links of properties having at least one elicitable component?
This question is trivial without allowing for link functions, due to examples such as $\Gamma(Y) = (\E[Y] + \Var[Y], \E[Y] - \Var[Y])$ where we take an elicitable property $(\E[Y],\Var[Y])$ and apply an invertible link to disrupt the elicitability of each component.

\subsection{Extensions to Non-convex $\P$}
\label{sec:extens-non-conv}

Throughout the paper we have assumed the set of probability measures $\P$ is convex.
This assumption is primarily for ease of exposition; here we briefly discuss which results may be extended to non-convex $\P$.
First, our elicitation complexity upper bounds all hold for any class $\P$ for which the relevant expectations, e.g.\ of the loss, are finite, and in particular, do not require $\P$ to be convex.
Our lower bounds are more delicate.
The proof of Lemma~\ref{lem:elic-complex-bayes-concave}, which shows a certain strict concavity property for Bayes risks, does rely on $\P$ being convex.
Nonetheless, this restriction is not absolutely necessary; as we discuss in \S~\ref{sec:proof-lemma-bayes-concave}, for non-convex $\P$, the inequality in Lemma~\ref{lem:elic-complex-bayes-concave} would simply hold for any $\lambda \in (0,1)$ such that $\lambda p + (1-\lambda) p' \in \P$.
As such, with care, our lower bounds could be extended to non-convex $\P$ when one can still guarantee a version of Lemma~\ref{lem:main-iden-lower-bound} and the technical lemmas supporting Corollary~\ref{cor:bayes-risk-ident-lower-bound}.

\end{document}